\title{Training Quantized Neural Networks to Global Optimality via Semidefinite Programming}
\author{%
   \textbf{Burak Bartan} \\
  \texttt{bbartan@stanford.edu} \\
   \and
  \textbf{Mert Pilanci} \\
  \texttt{pilanci@stanford.edu} \\
  }
\date{Department of Electrical Engineering, 
  Stanford University \\ \today}
\icmltitlerunning{Training Quantized Neural Networks via Semidefinite Programming}
\DeclareMathOperator{\tr}{tr}
\DeclareMathOperator{\rank}{rank}
\DeclareMathOperator{\ones}{\bar{1}}
\DeclareMathOperator{\diag}{diag}
\DeclareMathOperator{\sign}{sign}
\newtheorem{theorem}{Theorem}
\newtheorem{remark}{Remark}
\DeclareMathOperator{\Exs}{\mathbb{E}}
\begin{document}

\ifthenelse{\boolean{arxivVersion}}{
\maketitle
}
{
\twocolumn[
\icmltitle{Training Quantized Neural Networks to Global Optimality via Semidefinite Programming}


\icmlsetsymbol{equal}{*}

\begin{icmlauthorlist}
\icmlauthor{Aeiau Zzzz}{equal,to}
\icmlauthor{Bauiu C.~Yyyy}{equal,to,goo}
\icmlauthor{Cieua Vvvvv}{goo}
\icmlauthor{Iaesut Saoeu}{ed}
\icmlauthor{Fiuea Rrrr}{to}
\icmlauthor{Tateu H.~Yasehe}{ed,to,goo}
\icmlauthor{Aaoeu Iasoh}{goo}
\icmlauthor{Buiui Eueu}{ed}
\icmlauthor{Aeuia Zzzz}{ed}
\icmlauthor{Bieea C.~Yyyy}{to,goo}
\icmlauthor{Teoau Xxxx}{ed}
\icmlauthor{Eee Pppp}{ed}
\end{icmlauthorlist}

\icmlaffiliation{to}{Department of Computation, University of Torontoland, Torontoland, Canada}
\icmlaffiliation{goo}{Googol ShallowMind, New London, Michigan, USA}
\icmlaffiliation{ed}{School of Computation, University of Edenborrow, Edenborrow, United Kingdom}

\icmlcorrespondingauthor{Cieua Vvvvv}{c.vvvvv@googol.com}
\icmlcorrespondingauthor{Eee Pppp}{ep@eden.co.uk}

\icmlkeywords{Machine Learning, ICML}
\vskip 0.3in
]


\printAffiliationsAndNotice{\icmlEqualContribution} 
}

\begin{abstract}
Neural networks (NNs) have been extremely successful across many tasks in machine learning. Quantization of NN weights has become an important topic due to its impact on their energy efficiency, inference time and deployment on hardware. Although post-training quantization is well-studied, training optimal quantized NNs involves combinatorial non-convex optimization problems which appear intractable. In this work, we introduce a convex optimization strategy to train quantized NNs with polynomial activations. Our method leverages hidden convexity in two-layer neural networks from the recent literature, semidefinite lifting, and Grothendieck's identity. Surprisingly, we show that certain quantized NN problems can be solved to global optimality in polynomial-time in all relevant parameters via semidefinite relaxations. We present numerical examples to illustrate the effectiveness of our method.
\end{abstract}
\section{Introduction} \label{sec:introduction}

In this paper we focus on training quantized neural networks for efficient machine learning models. We consider the combinatorial and non-convex optimization of minimizing empirical error with respect to quantized weights. We focus on polynomial activation functions, where the training problem is still non-trivial to solve.

Recent work has shown that two-layer neural networks with ReLU \cite{pilanci2020neural,sahiner2020vector} and leaky ReLU activations \cite{lacotte2020all} can be trained via convex optimization in polynomial time with respect to the number of samples and neurons. Moreover, degree-two polynomial activations can be trained to global optimality in polynomial time with respect to all problem dimensions using semidefinite programming \cite{bartan2021neural}. In this work, we take a similar convex duality approach that involves semidefinite programming. However, our method and theoretical analysis are substantially different since we consider quantized weights, which involves a discrete non-convex optimization problem. The fact that the first layer weights are discrete renders it a combinatorial NP-hard problem and thus we cannot hope to obtain a similar result as in \cite{bartan2021neural} or \cite{pilanci2020neural}. In contrast, in \cite{bartan2021neural} it is shown that with the constraint $\|u_j\|_2=1$ and $\ell_1$-norm regularization on the second layer weights, the global optimum can be found in fully polynomial time and that the problem becomes NP-hard in the case of quadratic regularization (i.e. weight decay).

The approach that we present in this paper for training quantized neural networks is different from other works in the quantized neural networks literature. In particular, our approach involves deriving a semidefinite program (SDP) and designing a sampling algorithm based on the solution of the SDP. Our techniques lead to a provable guarantee for the difference between the resulting loss and the optimal non-convex loss for the first time. 

\subsection{Prior work}

Recently, there has been a lot of research effort in the realm of compression and quantization of neural networks for real-time implementations. In \cite{ternary_quantization}, the authors proposed a method that reduces network weights into ternary values by performing training with ternary values. Experiments in \cite{ternary_quantization} show that their method does not suffer from performance degradation and achieve 16x compression compared to the original model. 
The authors in \cite{compress_quantization} focus on compressing dense layers using quantization to tackle model storage problems for large-scale models. The work presented in \cite{deep_compression} also aims to compress deep networks using a combination of pruning, quantization and Huffman coding. In \cite{fixed_point_quantization}, the authors present a quantization scheme where they use different bit-widths for different layers (i.e., bit-width optimization). Other works that deal with fixed point training include \cite{challenges_fixed_point}, \cite{GuptaAGN15}, \cite{fixedpoint_hwang}. Furthermore, \cite{anwar2015} proposes layer-wise quantization based on $\ell_2$-norm error minimization followed by retraining of the quantized weights. However, these studies do not address optimal approximation. In comparison, our approach provides optimal quantized neural networks.

In \cite{allenzhu2020backward}, it was shown that the degree two polynomial activation functions perform comparably to ReLU activation in practical deep networks. Specifically, it was reported in \cite{allenzhu2020backward} that for deep neural networks, ReLU activation achieves a classification accuracy of $0.96$ and a degree two polynomial activation yields an accuracy of $0.95$ on the Cifar-10 dataset. Similarly for the Cifar-100 dataset, it is possible to obtain an accuracy of $0.81$ for ReLU activation and $0.76$ for the degree two polynomial activation. These numerical results are obtained for the activation $\sigma(t)=t+0.1t^2$. Furthermore, in encrypted computing, it is desirable to have low degree polynomials as activation functions. For instance, homomorphic encryption methods can only support additions and multiplications in a straightforward way. These constraints make low degree polynomial activations attractive for encrypted networks. In \cite{gilad2016cryptonets}, degree two polynomial approximations were shown to be effective for accurate neural network predictions with encryption. These results demonstrate that polynomial activation neural networks are a promising direction for further exploration.

Convexity of infinitely wide neural networks was first considered in \cite{bengio2006convex} and later in \cite{bach2017breaking}. A convex geometric characterization of finite width neural networks was developed in \cite{ergen2020aistats,ergen2019shallow,bartan2019convex}. Exact convex optimization representations of finite width two-layer ReLU neural network problems were developed first in \cite{pilanci2020neural} and extended to leaky ReLU \cite{lacotte2020all} and polynomial activation functions \cite{bartan2021neural}. These techniques were also extended to other network architectures including three-layer ReLU \cite{ergen2020implicit}, autoencoder \cite{sahiner2020convex}, autoregressive \cite{guptaexact}, batch normalization \cite{ergen2021demystifying} and deeper networks \cite{ergen2020convexdeep}.

\subsection{Notation}
We use $X \in \mathbb{R}^{n \times d}$ to denote the data matrix throughout the text, where its rows $x_i \in \mathbb{R}^d$ correspond to data samples and columns are the features. $y \in \mathbb{R}^{n}$ denotes the target vector. We use $\ell(\hat{y}, y)$ for convex loss functions where $\hat{y}$ is the vector of predictions and $\ell^*(v) = \sup_{z}(v^Tz - \ell(z, y))$ denotes its Fenchel conjugate. 
$\tr$ refers to matrix trace. $\sign(\cdot)$ is the elementwise sign function.
We use the notation $Z \succeq 0$ for positive semidefinite matrices (PSD). 
We use $\circ$ for the Hadamard product of vectors and matrices. The symbol $\otimes$ denotes the Kronecker product. We use $\lambda_{\max}(\cdot)$ to denote the largest eigenvalue of its matrix argument. If the input to $\diag(\cdot)$ is a vector, then the result is a diagonal matrix with its diagonal entries equal to the entries of the input vector. If the input to $\diag(\cdot)$ is a matrix, then the result is a vector with entries equal to the diagonal entries of the input matrix. $\ones$ refers to the vector of $1$'s.
$\mathbb{S}^{d\times d}$ represents the set of $(d\times d)$-dimensional symmetric matrices.

\section{Lifting Neural Network Parameters} \label{sec:lifting}
We focus on two-layer neural networks with degree two polynomial activations $\sigma(t) := at^2 + bt + c$. Let $f: \mathbb{R}^d \rightarrow \mathbb{R}$ denote the neural network defined as
\begin{align}
    f(x) = \sum_{j=1}^m \sigma(x^Tu_j) \alpha_j
    \label{eq:polyactnetwork}
\end{align}
where $x\in \mathbb{R}^d$ is the input sample, $u_j \in \mathbb{R}^d$ and $\alpha_j \in \mathbb{R}$ are the first and second layer weights, respectively. This is a fully connected neural network with $m$ neurons in the hidden layer.
We focus on the setting where the first $dm$ weights (i.e., $u_j \in \mathbb{R}^d$, $j=1,\dots,m$) in the hidden layer are constrained to be integers.

The results are extended to neural networks with vector outputs, i.e., $f: \mathbb{R}^d \rightarrow \mathbb{R}^C$, in Section \ref{sec:vector_output} of the Appendix.

\subsection{Bilinear activation networks}
Now we introduce a simpler architecture with bilinear activations $\mathcal{X} \rightarrow u^T \mathcal{X} v$ and binary quantization given by 
\begin{align}
    f^\prime(\mathcal{X}) = \sum_{j=1}^{m^\prime} u_j^T \mathcal{X} v_j \alpha_j \quad \mbox{with}  \quad u_j,v_j \in \{-1,+1\}^d,\,\, \alpha_j \in \mathbb{R}, \,\,\forall j
\end{align}
where $\mathcal{X} \in\mathbb{R}^{d\times d}$ is the lifted version of the input $x \in \mathbb{R}^d$ as will be defined in the sequel. We show that this architecture is sufficient to represent multi-level integer quantization and degree two polynomial activations without any loss of generality. In addition, these networks can be mapped to the standard network in \eqref{eq:polyactnetwork} in a straightforward way as we formalize in this section. Hence, some of our results leverage the above architecture for training and transform a bilinear activation network into a polynomial activation network.

\begin{theorem}[Reduction to binary quantization and bilinear activation] \label{thm:reduction_to_binary}
The following multi-level (i.e. $M+1$ levels) quantized neural network
\begin{align*}
   &f(x) = \sum_{j=1}^m \sigma(x^Tu_j) \alpha_j \\ &\mbox{ where } u_j\in\{-M,-M+2,\dots,0,\dots,M-2,M\}^d,\alpha_j\in\mathbb{R},\,\forall j
\end{align*}
can be represented as a binary quantized bilinear activation network
\begin{align*}
    f^\prime(\mathcal{X}) = \sum_{j=1}^{m^\prime} u_j^T \mathcal{X} v_j \alpha_j \mbox{ where } u_j,v_j\in\{-1,+1\}^{dM+1},
\end{align*}
$\mathcal{X} := \left[\begin{array}{cc} a\tilde x\tilde x^T  & \frac{b}{2}\tilde x\\ \frac{b}{2}\tilde x^T & c  \end{array}\right]$ and $\tilde x:=x\otimes 1_{M}$. Conversely, any binary quantized bilinear activation network $f^\prime(\mathcal{X})$ of this form can be represented as a multi-level quantized neural network $f(x)$.
\end{theorem}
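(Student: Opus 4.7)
The plan is to prove both directions by explicitly constructing the map between the two parameterizations and verifying agreement via a block expansion of $u^T \mathcal{X} v$. The central elementary fact I will use throughout is that any integer $w \in \{-M,-M+2,\dots,M\}$ can be written as $w=\sum_{k=1}^M s_k$ with $s_k\in\{-1,+1\}$, because flipping signs of these $M$ bits one at a time sweeps the sum from $-M$ to $+M$ in steps of $2$.

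For the forward direction, I will apply this sum-of-signs representation coordinatewise to each $u_j\in\{-M,\dots,M\}^d$ to obtain a binary string $s_j\in\{-1,+1\}^{dM}$ such that $s_j^T\tilde x = x^T u_j$; this works precisely because $\tilde x = x\otimes 1_M$ duplicates every entry $x_i$ exactly $M$ times, so summing $M$ signed copies recovers $u_{j,i} x_i$. I then set $\tilde u_j=\tilde v_j=[s_j^T,\,1]^T\in\{-1,+1\}^{dM+1}$ and perform the bordered block expansion
\begin{align*}
\tilde u_j^T \mathcal{X}\, \tilde v_j \;=\; a(s_j^T\tilde x)^2 + \tfrac{b}{2}(s_j^T\tilde x) + \tfrac{b}{2}(\tilde x^T s_j) + c \;=\; a(x^T u_j)^2 + b(x^T u_j) + c \;=\; \sigma(x^T u_j),
\end{align*}
so taking $m^\prime=m$ with the same coefficients $\alpha_j$ yields $f^\prime(\mathcal{X})=f(x)$.

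For the converse, write any $u_j,v_j\in\{-1,+1\}^{dM+1}$ as $[u_j^\prime;\epsilon_{u,j}]$ and $[v_j^\prime;\epsilon_{v,j}]$ with $u_j^\prime,v_j^\prime\in\{-1,+1\}^{dM}$; summing coordinates of $u_j^\prime$ in successive blocks of $M$ produces an integer vector $w_j^{(u)}\in\{-M,\dots,M\}^d$ with $u_j^{\prime T}\tilde x = x^T w_j^{(u)}$, and analogously for $v$. The same block expansion gives
\begin{align*}
u_j^T \mathcal{X}\, v_j \;=\; a(x^T w_j^{(u)})(x^T w_j^{(v)}) + \tfrac{b}{2}\bigl(\epsilon_{v,j}\, x^T w_j^{(u)} + \epsilon_{u,j}\, x^T w_j^{(v)}\bigr) + c\,\epsilon_{u,j}\epsilon_{v,j},
\end{align*}
and I will rewrite this via the polarization identity $4PQ=(P+Q)^2-(P-Q)^2$ together with the combinations $\sigma(x^T r)\pm\sigma(-x^T r)$ to isolate pure quadratic, pure linear and pure constant pieces. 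This expresses $u_j^T\mathcal{X} v_j$ as a signed finite sum of $\sigma(x^T r_k)$ for integer vectors $r_k$ living in an at most doubled multi-level grid (with $M$ replaced by $2M$).

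I expect the converse to be the main obstacle: the binary-to-integer summation map is many-to-one, so I must verify that the polarization decomposition produces weight vectors whose entries have the correct parity for the multi-level grid and that the linear and constant pieces can be extracted using only a bounded number of additional $\sigma$-neurons. The forward direction is the cleaner half and reduces to the matrix computation above; the bordered shape of $\mathcal{X}$ is exactly tuned so that the single $+1$ appended to the binary vector brings in the linear and constant parts of $\sigma$ with the correct coefficients.
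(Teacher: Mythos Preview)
Your forward direction is the paper's argument verbatim: encode each coordinate of $u_j\in\mathcal{Q}_M^d$ as a sum of $M$ signs so that $q^Tx=s^T\tilde x$ with $s\in\{-1,+1\}^{dM}$, append a $+1$, and read off $\tilde u^T\mathcal{X}\tilde u=\sigma(x^Tq)$ from the bordered block structure of $\mathcal{X}$.

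For the converse your plan is correct but takes a longer route than the paper's. You first expand $u_j^T\mathcal{X}v_j$ into its scalar constituents---the cross term $a(x^Tw^{(u)})(x^Tw^{(v)})$, two linear terms, and a constant---and then rebuild each piece from $\sigma$-evaluations via the scalar polarization identity $4PQ=(P+Q)^2-(P-Q)^2$ together with the even/odd combinations $\sigma(t)\pm\sigma(-t)$; this works, but it forces separate bookkeeping for each piece and the parity check on the doubled grid that you flag. The paper instead applies a single \emph{matrix-level} symmetrization identity,
\[
2\,u^T\mathcal{X}v \;=\; (u+v)^T\mathcal{X}(u+v)\;-\;u^T\mathcal{X}u\;-\;v^T\mathcal{X}v,
\]
directly to the symmetric lifted input $\mathcal{X}$. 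Because the quadratic, linear and constant parts of $\sigma$ are already packaged in the blocks of $\mathcal{X}$, each of the three resulting terms is again a quadratic form $w^T\mathcal{X}w$ with $w\in\{-1,0,1\}^{dM+1}$ (namely $u$, $v$, and $\tfrac12(u+v)$), and block-summing the first $dM$ coordinates sends it back to a multi-level polynomial-activation neuron in one step. The linear and constant pieces therefore come along for free rather than needing their own $\sigma(t)\pm\sigma(-t)$ extraction, and the paper arrives at a $3m$-neuron representation in a single line; your approach reaches the same conclusion with more neurons per bilinear unit and more case analysis.
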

%

In the remainder of this section, we provide a constructive proof of the above theorem by showing the reduction in three steps: Reducing to binary quantization, lifting and reducing to bilinear activation.

\subsection{Reducing multi-level quantization to binary} \label{subsec:multilevel}
In this section, we show that the two level binary quantization $\{-1,1\}$ model is sufficient to model other quantization schemes with integer levels. Hence, we can focus on binary quantized neural network models without loss of generality.
Suppose that $q$ represents a hidden neuron quantized to $M+1$ levels given by
\begin{align}
    q\in\mathcal{Q}_{M}^d:=\{-M, -M+2,\dots,0,\dots,M-2,M\}^d\,.
\end{align}
Then we equivalently have
\begin{align}
    q^T x = \sum_{i=1}^d q_i x_i = \sum_{i=1}^d \sum_{k=1}^M u_{k+(i-1)M} x_i = u^T \tilde x\,, 
\end{align}
where $\tilde x = x\otimes 1_{M} = [x_1,x_1,\dots,x_2,x_2,\dots,]^T \in \mathbb{R}^{dM}$ since $\sum_{k=1}^M u_{k+(i-1)M}\in \mathcal{Q}_{M}\,\forall i$. Therefore, stacking the input data $x$ by replication as $\tilde x \in \mathbb{R}^{dM}$ enables $M+1$ level quantization to be represented as binary quantization.
\subsection{Lifting dimensions}
We first show that binary quantized networks with degree two polynomial activations are equivalent to binary quantized networks with quadratic activations. Note that the network output can be expressed as
\begin{align}
    f(x) &= \sum_{j=1}^m \big(a(x^Tu_j)^2 + b (x^Tu_j) + c\big)\alpha_j \nonumber \\
         &= \sum_{j=1}^m \tilde u^T_j \left[\begin{array}{cc} axx^T  & \frac{b}{2}x\\ \frac{b}{2}x^T & c  \end{array}\right] \tilde u_j\alpha_j
\end{align}
where we defined the augmented weight vectors $\tilde u_j:=[u^T_j, 1]^T$. Consequently, we can safely relax this constraint to $\tilde u_j\in\{-1,+1\}^{d+1}$ since $\tilde u^T_j \left[\begin{array}{cc} axx^T  & \frac{b}{2}x\\ \frac{b}{2}x^T & c  \end{array}\right] \tilde u_j = (-\tilde u_j)^T \left[\begin{array}{cc} axx^T  & \frac{b}{2}x\\ \frac{b}{2}x^T & c  \end{array}\right] (-\tilde u_j)$ and we can assume $(\tilde u_j)_{d+1} = 1$ without loss of generality.
\subsection{Reduction to bilinear activation}
Now we show that we can consider the network model 
\begin{align}
    f(x) &= \sum_{j=1}^m  u^T_j \underbrace{\left[\begin{array}{cc} axx^T  & \frac{b}{2}x\\ \frac{b}{2}x^T & c  \end{array}\right]}_{\mathcal{X}}  v_j\alpha_j = \sum_{j=1}^m u^T_j \mathcal{X}  v_j\alpha_j
\end{align}
where $\{u_j,v_j\}_{j=1}^m$ are the model parameters to represent networks with quadratic activations without loss of generality.
Using the symmetrization identity
\begin{align} \label{eq:symmetrization_identity}
    2u^T A v = (u+v)^T A (u+v) - u^T A u - v^T A v \,,
\end{align}
we can express the neural network output as
\begin{align*}
    2f(x) &= \sum_{j=1}^m \left(  (u_j+v_j)^T \left[\begin{array}{cc} axx^T  & \frac{b}{2}x\\ \frac{b}{2}x^T & c  \end{array}\right]  (u_j+v_j)\alpha_j -  u_j^T \left[\begin{array}{cc} axx^T  & \frac{b}{2}x\\ \frac{b}{2}x^T & c  \end{array}\right]  u_j\alpha_j -  v_j^T \left[\begin{array}{cc} axx^T  & \frac{b}{2}x\\ \frac{b}{2}x^T & c  \end{array}\right]  v_j\alpha_j \right)\,.
\end{align*}
Note that $\frac{1}{2} (u_j+v_j) \in \{-1,0,1\}^d$ and the above  can be written as a quantized network with quadratic activation and $3m$ hidden neurons.

\section{Convex Duality of Quantized Neural Networks and SDP Relaxations} \label{sec:duality}
We consider the following non-convex training problem for the two-layer polynomial activation network
\begin{align}
    p^* = &\min_{\mbox{s.t.}\, u_j\in\{-1,1\}^d,\alpha_j\in\mathbb{R}\,j\in[m]}  \ell \left(\sum_{j=1}^m \sigma(Xu_j) \alpha_j, \, y \right) + \beta d \sum_{j=1}^m \vert\alpha_j\vert \,.
    \label{eq:nonconvexpolyact}
\end{align}
Here, $\ell(\cdot,y)$ is a convex and Lipschitz loss function, $\sigma(t) := at^2 + bt + c$ is a degree-two polynomial activation function, $m$ is the number of neurons, $\beta$ is the regularization parameter. 

It is straightforward to show that this optimization problem is NP-hard even for the case when $m=1$, $\sigma(t) = t$ is the linear activation and $\ell(u,y)=(u-y)^2$ is the squared loss via a reduction to the MaxCut problem \cite{gwmaxcut}.

We scale the regularization term by $d$ to account for the fact that the hidden neurons have Euclidean norm $\sqrt{d}$, which simplifies the notation in the sequel. Taking the convex dual with respect to the second layer weights $\{\alpha_j\}_{j=1}^m$, the optimal value of the primal is lower bounded by
\begin{align} \label{eq:max_problem_initial}
    p^* \geq d^* &=\max_{|v^T \sigma(Xu)| \leq \beta d \,, \forall u\in\{-1,1\}^d} -\ell^*(-v) \nonumber \\
    &= \max_{\max_{u: u\in\{-1,1\}^d} |v^T \sigma(Xu)| \leq \beta d} -\ell^*(-v).
\end{align}
Remarkably, the above dual problem is a convex program since the constraint set is an intersection of linear constraints. However, the number of linear constraints is exponential due to the binary quantization constraint. 

We now describe an SDP relaxation which provides a lower-bounding and tractable dual convex program. Our formulation is inspired by the SDP relaxation of MaxCut \cite{gwmaxcut}, which is analogous to the constraint subproblem in \eqref{eq:max_problem_initial}. Let us assume that the activation is quadratic $\sigma(u)=u^2$, since we can reduce degree two polynomial activations to quadratics without loss of generality as shown in the previous section. Then, we have $|v^T \sigma(Xu)| = |u^T (\sum_{i=1}^n v_ix_ix_i^T) u|$.

The constraint $\max_{u:u_i^2=1,\forall i}|v^T(Xu)^2| \leq \beta d$ can be equivalently stated as the following two inequalities
\begin{align}
    q_1^* &= \max_{u:u_i^2=1,\forall i} u^T \left(\sum_{i=1}^n v_ix_ix_i^T\right) u \leq \beta d \,,  \nonumber \\
    q_2^* &= \max_{u:u_i^2=1,\forall i} u^T \left(-\sum_{i=1}^n v_ix_ix_i^T\right) u \leq \beta d \,.
\end{align}
The SDP relaxation for the maximization $\max_{u:u_i^2=1,\forall i} u^T \left(\sum_{i=1}^n v_ix_ix_i^T\right) u$ is given by
\begin{align} \label{eq:sdp_relaxation}
    \hat{q}_1 = \max_{U \succeq 0,\, U_{ii}=1,\forall i} \tr\left( \sum_{i=1}^n v_i x_ix_i^T U \right) \,,
\end{align}
where $U \in \mathbb{S}^{d \times d}$. This is a relaxation since we removed the constraint $\rank(U)=1$. Hence, the optimal value of the relaxation is an upper bound on the optimal solution, i.e., $\hat{q}_1 \geq q_1^*$. Consequently, the relaxation leads to the following lower bound:
\begin{align}
    d^* &\geq \max_{q_1^* \leq \beta d, \, q_2^* \leq \beta d} -\ell^*(-v) \geq \max_{\hat{q}_1 \leq \beta d, \, \hat{q}_2 \leq \beta d} -\ell^*(-v) \,.
\end{align}
More precisely, we arrive at $d^*\ge d_{\mathrm{SDP}}$ where
\begin{align} \label{eq:dual_w_max_constraints}
    d_{\mathrm{SDP}}:=\max_{v} & \, -\ell^*(-v) \nonumber \\
    \mbox{s.t.} \, &\max_{U \succeq 0,\, U_{ii}=1,\forall i} \tr\left( \sum_{i=1}^n v_i x_ix_i^T U \right) \leq \beta d \nonumber \\
    &\max_{U \succeq 0,\, U_{ii}=1,\forall i} \tr\left(- \sum_{i=1}^n v_i x_ix_i^T U \right) \leq \beta d \,.
\end{align}

The dual of the SDP in the constraint \eqref{eq:sdp_relaxation} is given by the dual of the MaxCut SDP relaxation, which can be stated as
\begin{align} \label{eq:dual}
    \min_{z \in \mathbb{R}^d} \quad & d \cdot \lambda_{\max}\left(\sum_{i=1}^n v_i x_ix_i^T + \diag(z)\right) \nonumber \\
    \mbox{s.t.} \quad & \ones^T z = 0 \,.
\end{align}
Since the primal problem is strictly feasible, it follows from Slater's condition that the strong duality holds between the primal SDP and the dual SDP. This allows us to reformulate the problem in \eqref{eq:dual_w_max_constraints} as
%
\begin{align}
    \max_{v,z_1,z_2} \quad &-\ell^*(-v) \nonumber \\
    \mbox{s.t.} \quad & \lambda_{\max}\left(\sum_{i=1}^n v_i x_ix_i^T + \diag(z_1)\right) \leq \beta \nonumber \\
    & \lambda_{\max}\left(-\sum_{i=1}^n v_i x_ix_i^T + \diag(z_2)\right) \leq \beta \nonumber \\
    & \ones^T z_1 = 0 ,\,\, \ones^T z_2 = 0 \,,
\end{align}
where the variables have dimensions $v\in\mathbb{R}^n$, $z_1, z_2 \in\mathbb{R}^d$ and $\lambda_{\max}$ denotes the largest eigenvalue. Expressing the largest eigenvalue constraints as linear matrix inequalities yields
\begin{align}
    d_{\mathrm{SDP}}:=\max_{v,z_1,z_2} \quad &-\ell^*(-v) \nonumber \\
    \mbox{s.t.} \quad & \sum_{i=1}^n v_i x_ix_i^T + \diag(z_1) - \beta I_d \preceq 0 \nonumber \\
    & -\sum_{i=1}^n v_i x_ix_i^T + \diag(z_2) - \beta I_d \preceq 0 \nonumber \\
    & \ones^T z_1 = 0 ,\,\, \ones^T z_2 = 0 \,.
\end{align}
Next, we will find the dual of this problem. First we write the Lagrangian:
\begin{align}
    &L(v,z_1,z_2,Z_1,Z_2,\rho_1,\rho_2) = \nonumber \\
    &=-\ell^*(-v) - \sum_{i=1}^n v_i x_i^T (Z_1 - Z_2)x_i + \beta \tr(Z_1 + Z_2) - \sum_{j=1}^d (Z_{1,jj}z_{1,j} + Z_{2,jj}z_{2,j}) + \sum_{j=1}^d (\rho_1 z_{1,j} + \rho_2 z_{2,j})
\end{align}
where $Z_1,Z_2 \in \mathbb{S}^{d\times d}$ and $\rho_1,\rho_2 \in \mathbb{R}$ are the Lagrange multipliers. Maximizing the Lagrangian with respect to $v,z_1,z_2$ leads to the following convex program
\begin{align}
    \min_{Z_1, Z_2, \rho_1, \rho_2} \quad & \ell \left(\begin{bmatrix} x_1^T (Z_1 - Z_2)x_1 \\ \vdots \\ x_n^T (Z_1 - Z_2)x_n \end{bmatrix}, \, y \right) + \beta \tr(Z_1 + Z_2) \nonumber \\
    \mbox{s.t.} \quad & Z_{1,jj} = \rho_1, \, Z_{2,jj} = \rho_2, \,\, j=1,\dots,d \nonumber \\
    &Z_1 \succeq 0, \, Z_2 \succeq 0 \,.
\end{align}
Equivalently,
\begin{align}
    p^*\ge d_{\mathrm{SDP}}:=\min_{Z_1, Z_2, \rho_1, \rho_2} \quad & \ell \left(\hat{y}, \, y \right) + \beta d (\rho_1 + \rho_2) \nonumber \\
    \mbox{s.t.} \quad & \hat{y}_i = x_i^T(Z_1-Z_2)x_i, \, i=1,\dots,n \nonumber \\
    & Z_{1,jj} = \rho_1, \, Z_{2,jj} = \rho_2, \, j=1,\dots,d \nonumber \\
    &Z_1 \succeq 0, \, Z_2 \succeq 0 \,.
\end{align}
Remarkably, the above SDP is a polynomial time tractable lower bound for the combinatorial non-convex problem $p^*$. 

\subsection{SDP relaxation for bilinear activation networks}
Now we focus on the bilinear architecture $f(x)=\sum_{j=1}^m (x^T u_j) (x^T v_j) \alpha_j$ and provide an SDP relaxation for the corresponding non-convex training problem. It will be shown that the resulting SDP relaxation is provably tight, where a matching upper bound can be obtained via randomization. Moreover, the resulting feasible solutions can be transformed into a quantized neural network with polynomial activations as we have shown in Section \ref{sec:lifting}.
Consider the non-convex training problem for the two-layer network with the bilinear activation given by
\begin{align}
    p^*_b = &\min_{\mbox{s.t.}\, u_j,v_j\in\{-1,1\}^d,\alpha_j\in\mathbb{R}\,\forall j\in[m]}  \ell \left(\sum_{j=1}^m ((Xu_j)\circ (Xv_j)) \alpha_j, \, y \right) + \beta d \sum_{j=1}^m \vert\alpha_j\vert \,.
    \label{eq:nonconvexbilinear}
\end{align}
Here $\circ$ denotes the Hadamard, i.e., direct product of two vectors. Repeating an analogous duality analysis (see Section \ref{sec:dual_analysis_bilinear} for the details), we obtain a tractable lower-bounding problem given by
\begin{align} \label{eq:sdpbilinear}
  p_b^* \ge d_{\mathrm{bSDP}}:= \min_{Q, \rho} \quad & \ell \left(\hat{y}, \, y \right) + \beta d \rho \nonumber \\
    \mbox{s.t.} \quad & \hat{y}_i = 2x_i^TZx_i, \, i=1,\dots,n \nonumber \\
    & Q_{jj} = \rho, \, j=1,\dots,2d\nonumber \\
    &Q=\begin{bmatrix} V & Z \\ Z^T & W \end{bmatrix} \succeq 0 \,.
\end{align}
The above optimization problem is a convex SDP, which can be solved efficiently in polynomial time.
\section{Main result: SDP Relaxation is Tight} \label{sec:sampling_algorithm}
We first introduce an existence result on covariance matrices which will be used in our quantized neural network construction. 
\begin{theorem}[Trigonometric covariance shaping]
\label{thm:cov}
Suppose that $Z^* \in \mathbb{R}^{d\times d}$ is an arbitrary matrix such that $\exists V,W:\, \begin{bmatrix} V & Z^* \\ Z^{*T} & W \end{bmatrix} \succeq 0$ and $V_{jj}=W_{jj}=1\,\forall j$. Then, there exists a PSD matrix $Q \in \mathbb{R}^{2d\times 2d} \succeq 0$ satisfying $Q_{jj}=1\,\forall j$ and
\begin{align}
    \arcsin( Q_{(12)}) = \gamma Z^*
\end{align}
where $Q=\begin{bmatrix} Q_{(11)} & Q_{(12)} \\ Q_{(21)} & Q_{(22)} \end{bmatrix}$, $\gamma=\ln(1+\sqrt{2})$, and $\arcsin$ is the elementwise inverse sine function.
\end{theorem}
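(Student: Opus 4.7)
The plan is to realize $Q$ as the Gram matrix of $2d$ explicitly constructed unit vectors produced by Krivine's tensor-power trick. First, I would factor the given PSD block matrix as a Gram matrix: there exist vectors $u_1,\dots,u_d,v_1,\dots,v_d$ in $\mathbb{R}^{2d}$ with $\langle u_i,u_{i'}\rangle = V_{ii'}$, $\langle v_j,v_{j'}\rangle = W_{jj'}$, and $\langle u_i,v_j\rangle = Z^*_{ij}$. The assumptions $V_{jj}=W_{jj}=1$ force $\|u_i\|=\|v_j\|=1$, and Cauchy--Schwarz yields $|Z^*_{ij}|\le 1$.

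Next, set $\gamma:=\ln(1+\sqrt{2})$ and note the crucial identity $\sinh(\gamma)=1$: indeed $e^\gamma=1+\sqrt{2}$ and $e^{-\gamma}=\sqrt{2}-1$, so $\sinh(\gamma)=\tfrac{1}{2}((1+\sqrt{2})-(\sqrt{2}-1))=1$. Writing $c_k := \gamma^{2k+1}/(2k+1)!$ for the positive Taylor coefficients of $\sinh(\gamma t)$, we have $\sin(\gamma t)=\sum_{k\ge 0}(-1)^k c_k\, t^{2k+1}$. Following Krivine, I would define lifted vectors in the Hilbert space $\bigoplus_{k\ge 0}(\mathbb{R}^{2d})^{\otimes (2k+1)}$ by
\[
U_i := \bigoplus_{k\ge 0} \sqrt{c_k}\, u_i^{\otimes (2k+1)}, \qquad V_j := \bigoplus_{k\ge 0} (-1)^k \sqrt{c_k}\, v_j^{\otimes (2k+1)}.
\]

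I would then verify three routine identities using $\langle x^{\otimes r}, y^{\otimes r}\rangle = \langle x,y\rangle^r$ on each tensor layer. First, the norms: $\|U_i\|^2 = \sum_k c_k \|u_i\|^{2(2k+1)} = \sum_k c_k = \sinh(\gamma) = 1$, and similarly $\|V_j\|=1$. Second, the cross inner products: $\langle U_i, V_j\rangle = \sum_k (-1)^k c_k \langle u_i,v_j\rangle^{2k+1} = \sin(\gamma\, Z^*_{ij})$ by the Taylor series for $\sin$. Finally, defining $Q\in\mathbb{R}^{2d\times 2d}$ to be the Gram matrix of $(U_1,\dots,U_d,V_1,\dots,V_d)$ produces a PSD matrix with $Q_{jj}=1$ whose upper-right block satisfies $(Q_{(12)})_{ij} = \sin(\gamma\, Z^*_{ij})$. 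Because $\gamma = \ln(1+\sqrt{2}) < \pi/2$ and $|Z^*_{ij}|\le 1$, every $\gamma\, Z^*_{ij}$ lies in $(-\pi/2,\pi/2)$ where $\arcsin$ is the inverse of $\sin$, giving $\arcsin(Q_{(12)}) = \gamma\, Z^*$ as required.

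The main obstacle, and the reason the specific constant $\gamma = \ln(1+\sqrt{2})$ appears, is the normalization step: one must choose $\gamma$ so that the lifted vectors $U_i, V_j$ have unit norm, which forces $\sum_k c_k = \sinh(\gamma) = 1$, whose unique positive solution is $\gamma = \sinh^{-1}(1) = \ln(1+\sqrt{2})$. Any smaller choice would leave the resulting Gram matrix with diagonal strictly less than one, while any larger choice would fail the Taylor normalization outright. A secondary, cosmetic point is that the tensor powers nominally live in an infinite-dimensional space; since $Q$ is the Gram matrix of only $2d$ vectors, its rank is at most $2d$, so the construction can always be realized with concrete vectors in $\mathbb{R}^{2d}$ after an isometric projection.
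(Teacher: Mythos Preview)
Your proof is correct and follows essentially the same route as the paper: the paper's proof simply observes that the PSD block condition yields unit vectors with $Z^*_{ij}=\langle u_i,v_j\rangle$ and then cites Lemma~4.2 of \cite{alon2004approximating}, which is precisely Krivine's tensor-power construction that you have spelled out in full. Your write-up is in fact more self-contained than the paper's, since it verifies the key normalization $\sinh(\gamma)=1$ and the range condition $\gamma<\pi/2$ explicitly rather than deferring them to the cited lemma; the mention of Grothendieck's identity in the paper's proof is not actually needed for the existence statement of this theorem and only becomes relevant in the subsequent sampling analysis.
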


Our construction is based on randomly generating quantized neurons whose empirical covariance matrix is matched to the optimal solution of the convex SDP. The above theorem is an existence result which will be crucial in our sampling algorithm. The important observation is that, if we let $\begin{bmatrix} u \\ v \end{bmatrix} \sim \sign( \mathcal{N}(0,Q))$ with some $Q\succeq 0,\, Q_{jj}=1\,\forall j$, then $\Exs \begin{bmatrix} u \\ v \end{bmatrix} \begin{bmatrix} u \\ v \end{bmatrix}^T  = \frac{2}{\pi} \arcsin(Q)$, which is referred to as Grothendieck's Identity  \cite{alon2004approximating}. Therefore, $\Exs [uv^T] = \arcsin Q_{(12)} = \gamma Z^*$, which is proportional to the target covariance matrix. This algorithm is inspired by Krivine's analysis of the Grothendieck's constant and its applications in approximating the cut norm using semidefinite programming \cite{alon2004approximating}. 
\begin{proof}[Proof of Theorem \ref{thm:cov}]
Note that the condition $\exists V,W:\, \begin{bmatrix} V & Z^* \\ Z^{*T} & W \end{bmatrix} \succeq 0$ and $V_{jj}=W_{jj}=1\,\forall j$ implies that there exists unit norm vectors $x_1,\dots,x_d,y_1,\dots,y_d$ such that $Z^*_{ij}=x_i^Ty_j$. Consequently, applying Lemma 4.2 of \cite{alon2004approximating} and Grothendieck's Identity completes the proof.
\end{proof}

\subsection{Sampling algorithm for approaching the global optimum} \label{subsec:sampling_alg}
Now we present our sampling algorithm which generates quantized neural networks parameters based on the solution of the lower-bounding convex SDP. The algorithm is listed in Algorithm \ref{alg:sampling_alg}. We explain each step of the algorithm below.

\begin{algorithm}[tb]
  \caption{Sampling algorithm for quantized neural networks}
  \label{alg:sampling_alg}
  \begin{enumerate}
  \item Solve the SDP in \eqref{eq:sdpbilinear}.  Define the scaled matrix $Z_s^* \leftarrow Z^*/\rho^*$.
  \item Solve the problem
    \begin{align} \label{eq:cov_matrix_Q_problem}
        Q^* := \arg \min_{Q\succeq 0, Q_{jj}=1\forall j} \|Q_{(12)} -  \sin(\gamma Z_s^*) \|_F^2 \,.
    \end{align}
  \item Sample the first layer weights $u_1,\dots,u_m, v_1,\dots,v_m$ from multivariate normal distribution as $\begin{bmatrix} u \\ v \end{bmatrix} \sim \sign( \mathcal{N}(0,Q^*))$ and set the second layer weights as $\alpha_j= \rho^* \frac{\pi}{\gamma m},\,\forall j$.
  \item (optional) Transform the quantized bilinear activation network to a quantized polynomial activation network.
  \end{enumerate}
\end{algorithm}

\begin{enumerate}
    \item Solve the SDP in \eqref{eq:sdpbilinear} to minimize the training loss. Denote the optimal solution as $Z^*$ and $\rho^*$ and define the scaled matrix $Z_s^* \leftarrow Z^*/\rho^*$.
    
    \item Find the $2d\times 2d$ covariance matrix $Q^*$ by solving \eqref{eq:cov_matrix_Q_problem} with $Q=\begin{bmatrix} Q_{(11)} & Q_{(12)} \\ Q_{(21)} & Q_{(22)} \end{bmatrix}$ where the notation $Q_{(ij)}$ denotes a $d\times d$ block matrix. $\gamma=\ln(1+\sqrt{2})$, and $\sin(\cdot)$ is the element-wise sine function. The objective value is guaranteed to be zero due to Theorem \ref{thm:cov}. Therefore we have $\arcsin(Q^*_{(12)})=\gamma Z_s^*$ and $Q^*\succeq 0, Q^*_{jj}=1\,\forall j$. 
    
    \item Sample $u_1,\dots,u_{m}, v_1,\dots,v_{m}$ via $\begin{bmatrix} u \\ v \end{bmatrix} \sim \sign( \mathcal{N}(0,Q^*))$. Since $\Exs[uv^T] = \frac{2}{\pi}\arcsin Q^*_{(12)} = \frac{2\gamma}{\pi} Z_s^*$ as a corollary of Theorem \ref{thm:cov}, we have $\Exs[\frac{1}{m} \sum_{j=1}^{m} u_j v_j^T] = \frac{2\gamma}{\pi} Z_s^*$. We set $\alpha_j= \rho^* \frac{\pi}{\gamma m},\,\forall j$ to obtain $\Exs [\sum_{j=1}^{m} u_j v_j^T\alpha_j] = 2Z_s^* \rho^*=2Z^*$. This is as desired since the SDP computes the predictions via $\hat{y}_i=2x_i^TZx_i$.
    
    
    \item This optional step can be performed as described in Section \ref{sec:lifting}.
\end{enumerate}

The extension of the sampling algorithm to the vector output networks is given in Section \ref{sec:vector_output}. 

\subsection{Concentration around the mean}
We establish a probabilistic bound on the convergence of the empirical sum $\frac{1}{m} \sum_{j=1}^m u_jv_j^T$ in the step 3 of the sampling algorithm to its expectation. Our technique involves applying Matrix Bernstein concentration bound for sums of i.i.d. rectangular matrices \cite{tropp2015introduction} to obtain:
%
%
\begin{align}
    \mathbb{P} \left[ \left \| \frac{1}{m }\sum_{j=1}^m u_jv_j^T - \Exs [u_1v_1^T] \right\|_2 \geq \epsilon \right] &\leq \exp\left( - \frac{m\epsilon^2}{(2\gamma / \pi)^2 \|Z_s^*\|_2^2 + d(c^\prime + 2\epsilon/3)}+\log(2d)\right).
\end{align}
for all $\epsilon>0$. 

We summarize this analysis in the following theorem, which is our main result.
%

\begin{theorem}[Main result]
\label{thm:main_v2}
Suppose that the number of neurons satisfies $m \geq c_1 \frac{L_c^2 R_m^4 d\log(d)}{\epsilon^2}$. Let $L_c$ denote the Lipschitz constant of the vectorized loss function under the $\ell$-infinity norm, i.e. $|\ell(z)-\ell(z^\prime)|\le L_{c}\|z-z^\prime\|_{\infty}$, and define $R_m :=\max_{i\in[n]}\|x_i\|_2$. Then, Algorithm \ref{alg:sampling_alg} generates a quantized neural network with weights $\hat u_j,\hat v_j \in \{-1,+1\}^d$ and $\hat \alpha_j=\frac{\rho^*\pi}{m\log(1+\sqrt{2}) }$, $j=1,\dots,m$ that achieve near optimal loss, i.e.,
\begin{align}
    \Big | \ell \big(\sum_{j=1}^m ((X\hat u_j) \circ (X \hat v_j)) \hat \alpha_j, \, y \big) - \ell \big(\sum_{j=1}^m ((X u^*_j) \circ (Xv^*_j)) \alpha^*_j, \, y \big) \Big| \leq \epsilon 
\end{align}
with probability at least $1-c_2e^{-c_3 \epsilon^2 m/d}$ for certain constants $c_1,c_2,c_3$ when the regularization coefficient satisfies $\beta \leq \frac{\epsilon}{d} \min\left( \frac{1}{\sum_j |\hat{\alpha}_j|}, \frac{1}{\sum_j |\alpha^*_j|} \right)$. The weights $u_j^*, v_j^* \in \{-1,+1\}^d, \alpha^*_j \in \mathbb{R}$, $j=1,\dots,m$ are the optimal network parameters for the non-convex combinatorial problem in  \eqref{eq:nonconvexbilinear}.
\end{theorem}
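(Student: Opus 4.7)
The plan is to decompose the loss gap into three pieces: one controlled by the Matrix Bernstein concentration bound already stated in the excerpt, one by the Lipschitz continuity of $\ell$, and one by the tightness of the SDP lower bound combined with the hypothesis on $\beta$. First I would express the algorithm's prediction as a quadratic form $\hat{y}_{\mathrm{alg},i} = x_i^\top A_m x_i$ with $A_m := \sum_{j=1}^m \hat{u}_j \hat{v}_j^\top \hat{\alpha}_j$, and observe that by the construction of $Q^*$ through Theorem~\ref{thm:cov} together with Grothendieck's identity, we have $\Exs[A_m] = 2Z^*$. Hence the deterministic SDP prediction $\hat{y}_{\mathrm{SDP},i} = 2 x_i^\top Z^* x_i$ coincides with $\Exs[\hat{y}_{\mathrm{alg},i}]$.

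Second, I would apply the Matrix Bernstein tail bound to $A_m - 2Z^*$. Each rank-one summand $\hat{u}_j \hat{v}_j^\top \hat{\alpha}_j$ has operator norm $O(\rho^*d/m)$, while the matrix variance parameter is controlled by $O(d/m)$ up to constants involving $\|Z_s^*\|_2$, reproducing the exponent already written in the excerpt. Passing from operator-norm concentration on $A_m$ to a uniform bound on the prediction vector uses $\|\hat{y}_{\mathrm{alg}} - \hat{y}_{\mathrm{SDP}}\|_\infty \le R_m^2\,\|A_m - 2Z^*\|_2$, after which the Lipschitz hypothesis gives $|\ell(\hat{y}_{\mathrm{alg}}, y) - \ell(\hat{y}_{\mathrm{SDP}}, y)| \le L_c R_m^2\,\|A_m - 2Z^*\|_2$. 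Selecting the Bernstein threshold as $\epsilon/(3 L_c R_m^2)$ and invoking the hypothesis $m \gtrsim L_c^2 R_m^4 d\log(d)/\epsilon^2$ makes this at most $\epsilon/3$ outside an event of probability $c_2 e^{-c_3 \epsilon^2 m/d}$.

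Third, I would convert closeness to the SDP optimum into closeness to $p_b^*$ by a two-sided sandwich. Since $d_{\mathrm{bSDP}} \le p_b^*$, we have $\ell(\hat{y}_{\mathrm{SDP}}, y) + \beta d\rho^* \le \ell(\hat{y}^*, y) + \beta d\sum_j |\alpha_j^*|$. Dropping the nonnegative $\beta d\rho^*$ and using the $\beta$-hypothesis to absorb $\beta d\sum_j |\alpha_j^*| \le \epsilon/3$ yields one direction. For the reverse, the sampled triple $(\hat{u}_j, \hat{v}_j, \hat{\alpha}_j)_j$ is feasible for~\eqref{eq:nonconvexbilinear}, so $p_b^* \le \ell(\hat{y}_{\mathrm{alg}}, y) + \beta d \sum_j |\hat{\alpha}_j|$, and the other half of the $\beta$-hypothesis gives $\ell(\hat{y}^*, y) \le \ell(\hat{y}_{\mathrm{alg}}, y) + \epsilon/3$. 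Chaining these with the Bernstein--Lipschitz step yields $|\ell(\hat{y}_{\mathrm{alg}}, y) - \ell(\hat{y}^*, y)| \le \epsilon$ on the stated high-probability event.

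The main technical obstacle is the Matrix Bernstein bookkeeping: one must verify that the uniform bound on each rank-one term and the variance of the centered sum combine to give exactly the exponent $-c_3 \epsilon^2 m/d$ without a hidden extra factor of $d$. This reduces to bounding $\|Z_s^*\|_2 = \|Z^*\|_2/\rho^*$ by a dimension-free constant, which one can extract from the feasibility constraints $Q_{jj}=1$ and $Q \succeq 0$ together with the normalization by $\rho^*$. If this control fails, the prefactor degrades and the required sample complexity grows beyond $d\log(d)/\epsilon^2$; getting this bookkeeping right, and tracking how $\rho^*$ flows through both the algorithm's weight scaling and the Bernstein parameters, is where all the nontrivial work lies.
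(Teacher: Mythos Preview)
Your proposal follows essentially the same three-part strategy as the paper's proof: Matrix Bernstein concentration on $\frac{1}{m}\sum_j \hat u_j \hat v_j^{T}$ around its mean $\frac{2\gamma}{\pi}Z_s^*$, Lipschitz transfer via $\|\hat y_{\mathrm{alg}}-\hat y_{\mathrm{SDP}}\|_\infty \le R_m^2\|A_m-2Z^*\|_2$, and the two-sided sandwich $d_{\mathrm{bSDP}}\le p_b^* \le \ell(\hat y_{\mathrm{alg}},y)+\beta d\sum_j|\hat\alpha_j|$ combined with the $\beta$-hypothesis to absorb the regularization terms. Your $\epsilon/3$ split is cosmetically different from the paper's ``bound by $2\frac{\rho^*\pi}{\gamma}L_cR_m^2\epsilon$ and rescale $\epsilon$ at the end'' bookkeeping, but the logic is identical.

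Your flagged obstacle is exactly the right place to look, but your proposed resolution is too optimistic: $\|Z_s^*\|_2$ is \emph{not} dimension-free in general. Taking $Q^*/\rho^*=\bigl[\begin{smallmatrix}\ones\\\ones\end{smallmatrix}\bigr]\bigl[\begin{smallmatrix}\ones\\\ones\end{smallmatrix}\bigr]^T$, which is PSD with unit diagonal, yields $Z_s^*=\ones\ones^T$ with $\|Z_s^*\|_2=d$, so the feasibility constraints $Q\succeq 0$, $Q_{jj}=1$ alone only give $\|Z_s^*\|_2\le d$. The paper's proof does not resolve this either: it writes the variance as $c' m d + m(2\gamma/\pi)^2\|Z_s^*\|_2^2$ and then asserts the resulting exponent constant is independent of $d$, in effect absorbing $\|Z_s^*\|_2$ and $\rho^*$ into the unspecified constants $c_1,c_2,c_3$ of the theorem statement rather than exhibiting an explicit dimension-free bound. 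So your instinct that ``getting this bookkeeping right \ldots\ is where all the nontrivial work lies'' is correct; the paper handles it at the level of unspecified constants, not via the clean bound you were hoping to extract.
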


\begin{remark}
For loss functions of the form $\ell(z)=\frac{1}{n}\sum_{i=1}^n \phi(z_i)$, where $\phi(\cdot)$ is a scalar $L_{c}$-Lipschitz loss satisfying $|\phi(s)-\phi(s^\prime)|\le L_{c} |s-s^\prime|$, the vectorized loss function $\ell(z)$ is $L_c$-Lipschitz under the infinity norm. This fact follows from $\big|\frac{1}{n}\sum_{i=1}^n \phi(z_i)-\frac{1}{n}\sum_{i=1}^n \phi(z^\prime_i) \big|\le \frac{1}{n}\sum_{i=1}^n L_c |z_i-z_i^\prime|\le L_c \|z-z^\prime \|_{\infty}$. Examples of $1$-Lipschitz loss functions include hinge loss, logistic loss and $\ell_1$ loss, which satisfy our assumption with $L_c=1$.
\end{remark}
\begin{remark}
Our main result also holds when $\beta\rightarrow 0$. In this regime, the constraint $\beta \leq \frac{\epsilon}{d} \min\left( \frac{1}{\sum_j |\hat{\alpha}_j|}, \frac{1}{\sum_j |\alpha^*_j|} \right)$ is always satisfied.
\end{remark}

The proof of Theorem \ref{thm:main_v2} is provided in Section \ref{sec:proof_main_thm}. To the best of our knowledge, this is the first result on polynomial-time optimal trainability of quantized neural networks.
We remark that one can transform the near optimal quantized bilinear activation network to a near optimal quantized polynomial activation network with the mapping shown in Section \ref{sec:lifting}. Consequently, this result also applies to approximating the solution of \eqref{eq:nonconvexpolyact}.

Additionally, note that the second layer weights are all identical, which allows us to represent the sampled neural network using $2md$ bits and only one scalar floating point variable. One can employ the reduction in Section \ref{subsec:multilevel} to train optimal multi-level quantized neural networks using the above result in polynomial time. 

Furthermore, it is interesting to note that, overparameterization is a key component in enabling optimization over the combinatorial search space of quantized neural networks in polynomial time. In contrast, the problems in \eqref{eq:nonconvexbilinear} and \eqref{eq:nonconvexpolyact} are NP-hard when $m=1$.
\section{Numerical Results} \label{sec:numerical_results}

In this section, we present numerical results that verify our theoretical findings. Additional numerical results can be found in the Appendix.

We compare the performance of the proposed SDP based method against a backpropagation based method that we describe in the next subsection. We have used CVXPY \cite{diamond2016cvxpy,agrawal2018rewriting} for solving the convex SDP. In particular, we have used the open source solver SCS (splitting conic solver) \cite{scs2016paper,scs2016code} in CVXPY, which is a scalable first order solver for convex cone problems. Furthermore, in solving the non-convex neural network training problems that we include for comparison, we have used the stochastic gradient descent (SGD) algorithm with momentum in PyTorch \cite{pytorch}. 

The experiments have been carried out on a MacBook with 2.2 GHz 6-Core Intel Core i7 processor and 16 GB of RAM.

\subsection{Planted dataset experiment}
Figure \ref{fig:planted_exp_pos_sec_layer} shows the cost as a function of the number of neurons $m$. The neural network architecture is a two-layer fully connected network with bilinear activation, i.e., $f(x)=\sum_{j=1}^m (x^Tu_j)(x^Tv_j)\alpha_j$. This experiment has been done using a planted dataset. The plot compares the method described in Section \ref{sec:sampling_algorithm} against a backpropagation based quantization method.

\begin{figure} [ht]
\begin{minipage}[b]{0.48\textwidth}
\centering
  \centerline{\includegraphics[width=0.9\columnwidth]{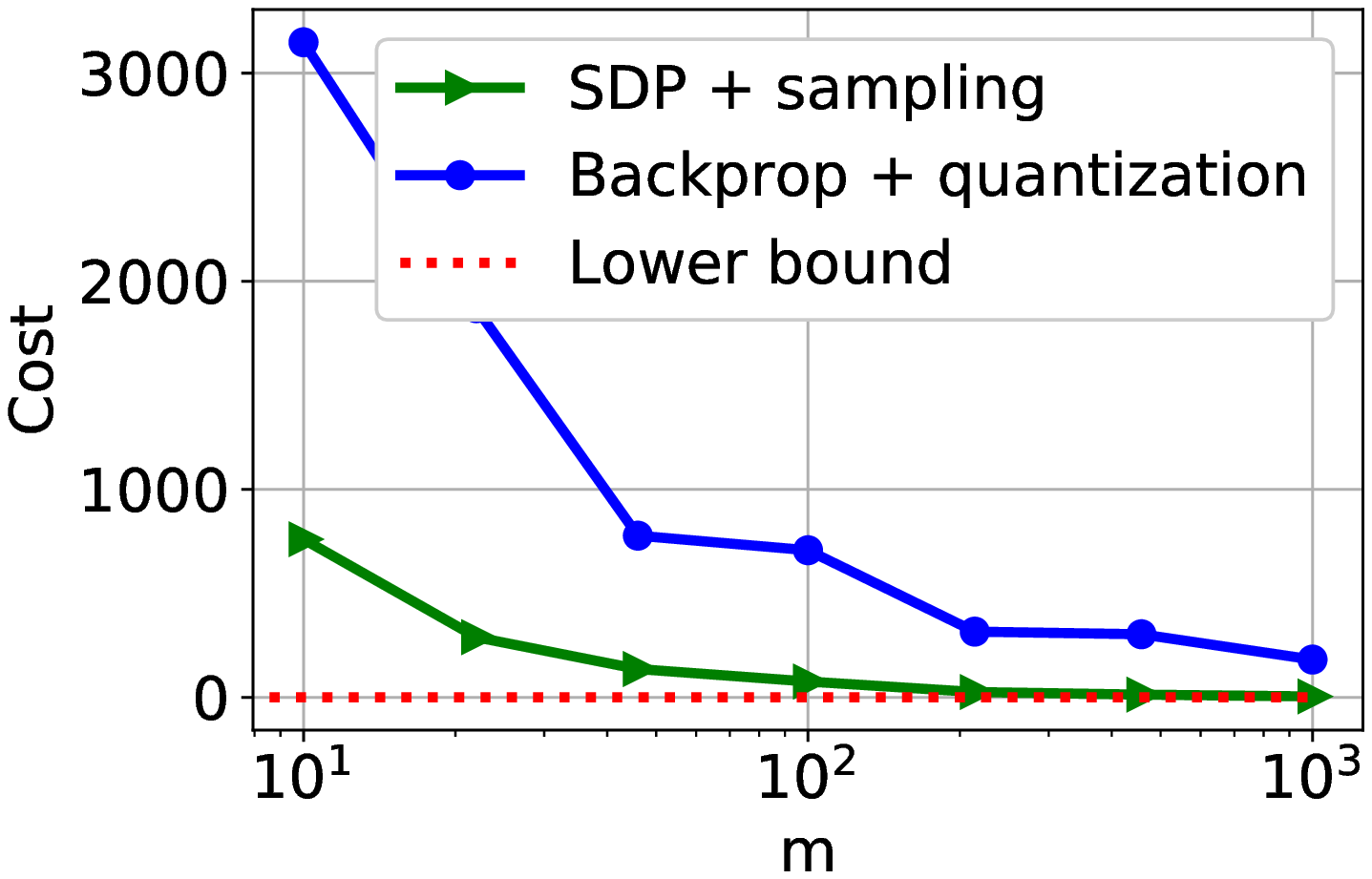}}
  \centerline{a) Training error}
\end{minipage}
\hfill
\begin{minipage}[b]{0.48\textwidth}
\centering
  \centerline{\includegraphics[width=0.9\columnwidth]{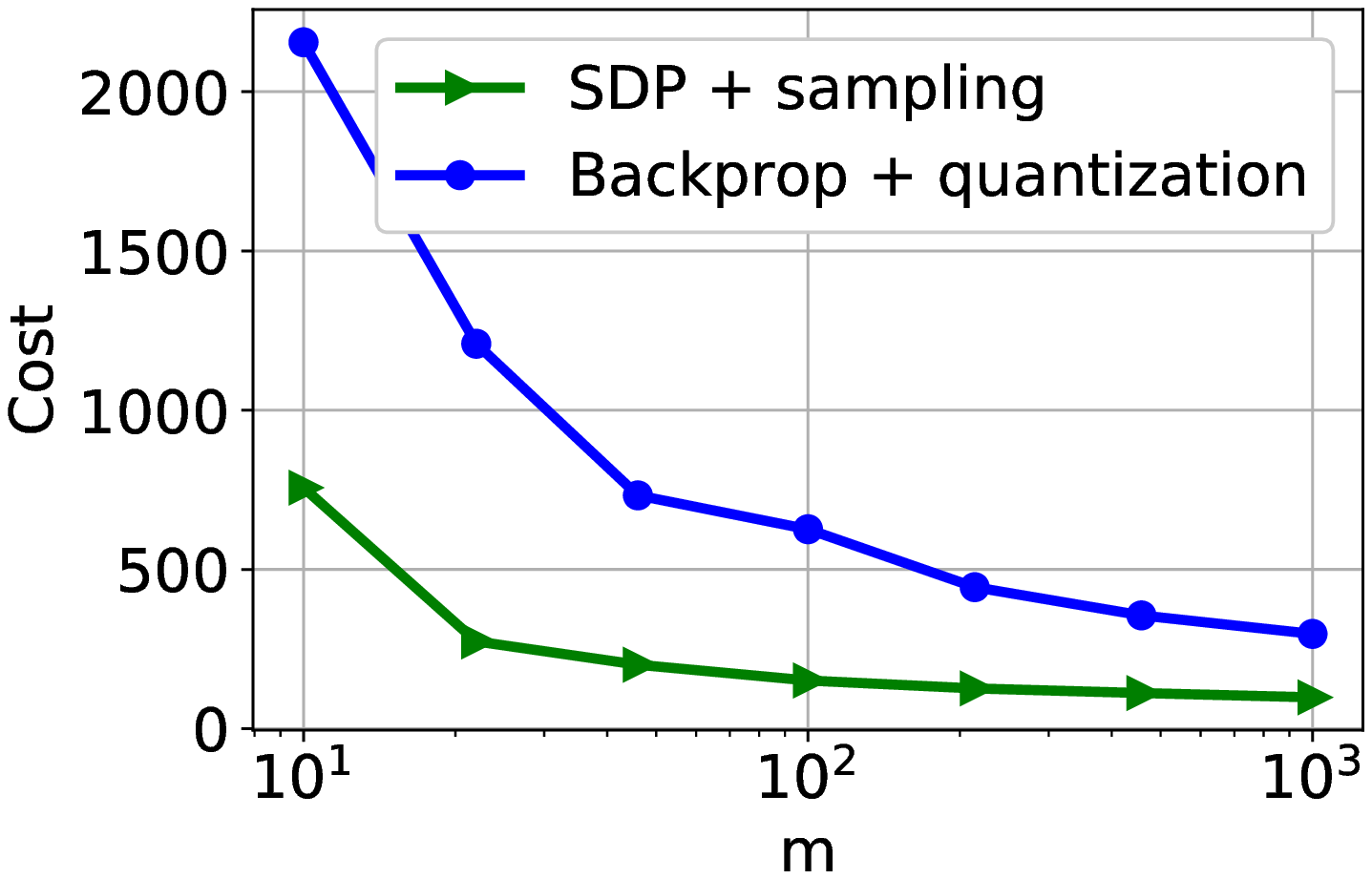}}
  \centerline{b) Test error}
\end{minipage}
\caption{Objective against the number of neurons $m$. Dataset $X$ has been synthetically generated via sampling from standard Gaussian distribution and has dimensions $n=100$, $d=20$. The target vector $y$ has been computed via a planted model with $10$ planted neurons. In the planted model, the first layer weights are binary and the second layer weights are real and non-negative. The regularization coefficient is $\beta=10^{-4}$. The lower bound is obtained by solving the SDP in Section \ref{sec:duality}. Plots a and b show the cost on the training and test sets, respectively. The test set has been generated synthetically by sampling from the same distribution as the training set.
}
\label{fig:planted_exp_pos_sec_layer}
\end{figure}

The algorithm in Section \ref{sec:sampling_algorithm} solves the relaxed SDP and then samples binary weights as described previously. This procedure results in $2md$ binary weights for the first layer. The second layer weights are all equal to $\rho^* \pi / (\gamma m)$. This network requires storage of $2md$ bits and a single real number. Furthermore, post-training quantization using backpropagation works as follows. First, we train a two-layer neural network with bilinear activation in PyTorch \cite{pytorch} with $m$ neurons using stochastic gradient descent (SGD). We fix the second layer weights to $1/m$ during training. After training, we form the matrices $\hat{Z} = \sum_{j=1}^m \sign(u_j)\sign(v_j^T)$ and $Z^* = \sum_{j=1}^m u_jv_j^T \frac{1}{m}$. Then, the solution of the problem $\min_{c\in \mathbb{R}} \| c\hat{Z} - Z^*\|_F^2$ is used to determine the second layer weights as $c$. The optimal solution is given by $c=\frac{\langle \hat{Z}, Z^*\rangle}{\langle Z^*, Z^* \rangle}$. This procedure results in $2md$ bits for the first layer and a single real number for the second layer and hence requires the same amount of storage as the SDP based method. 

In addition to low storage requirements, this particular network is very efficient in terms of computation. This is very critical for many machine learning applications as this translates to shorter inference times. For the two-layer neural network with bilinear activation, the hidden layer computations are $2md$ additions since the weights are $\{+1,-1\}$ and the bilinear activation layer performs $m$ multiplications (i.e. $(x^Tu_j)(x^Tv_j)$ $j=1,\dots,m$). The second layer requires only $m$ additions and one multiplication since the second layer weights are the same.

Figure \ref{fig:planted_exp_pos_sec_layer} shows that the SDP based method outperforms the backpropagation approach. Also, we observe that the cost of the SDP based method approaches the lower bound rapidly as the number of neurons $m$ is increased. Furthermore, plot b shows that the test set performance for the SDP based method is also superior to the backpropagation based method.

We note that another advantage of the SDP based sampling method over backpropagation is that we do not need to solve the SDP for a fixed number of neurons $m$. That is, the SDP does not require the number of neurons $m$ as an input. The number of neurons is used only during the sampling process. This enables one to experiment with multiple values for the number of neurons without re-solving the SDP.

\subsection{Real dataset experiment}
Figure \ref{fig:realdata_exp} compares the backpropagation approach and the SDP based method on a real dataset from UCI machine learning repository \cite{uci2019datasets}. The dataset is the binary classification "breast-cancer" dataset and has $n=228$ training samples and $58$ test samples and the samples are $d=9$ dimensional. Figure \ref{fig:realdata_exp} shows the classification accuracy against time for various methods which we describe below. The regularization coefficient $\beta$ is picked for each method separately by searching the value that yields the highest accuracy and the resulting $\beta$ values are provided in the captions of the figures.

\begin{figure} [ht]
\begin{minipage}[b]{0.48\textwidth}
\centering
  \centerline{\includegraphics[width=0.9\columnwidth]{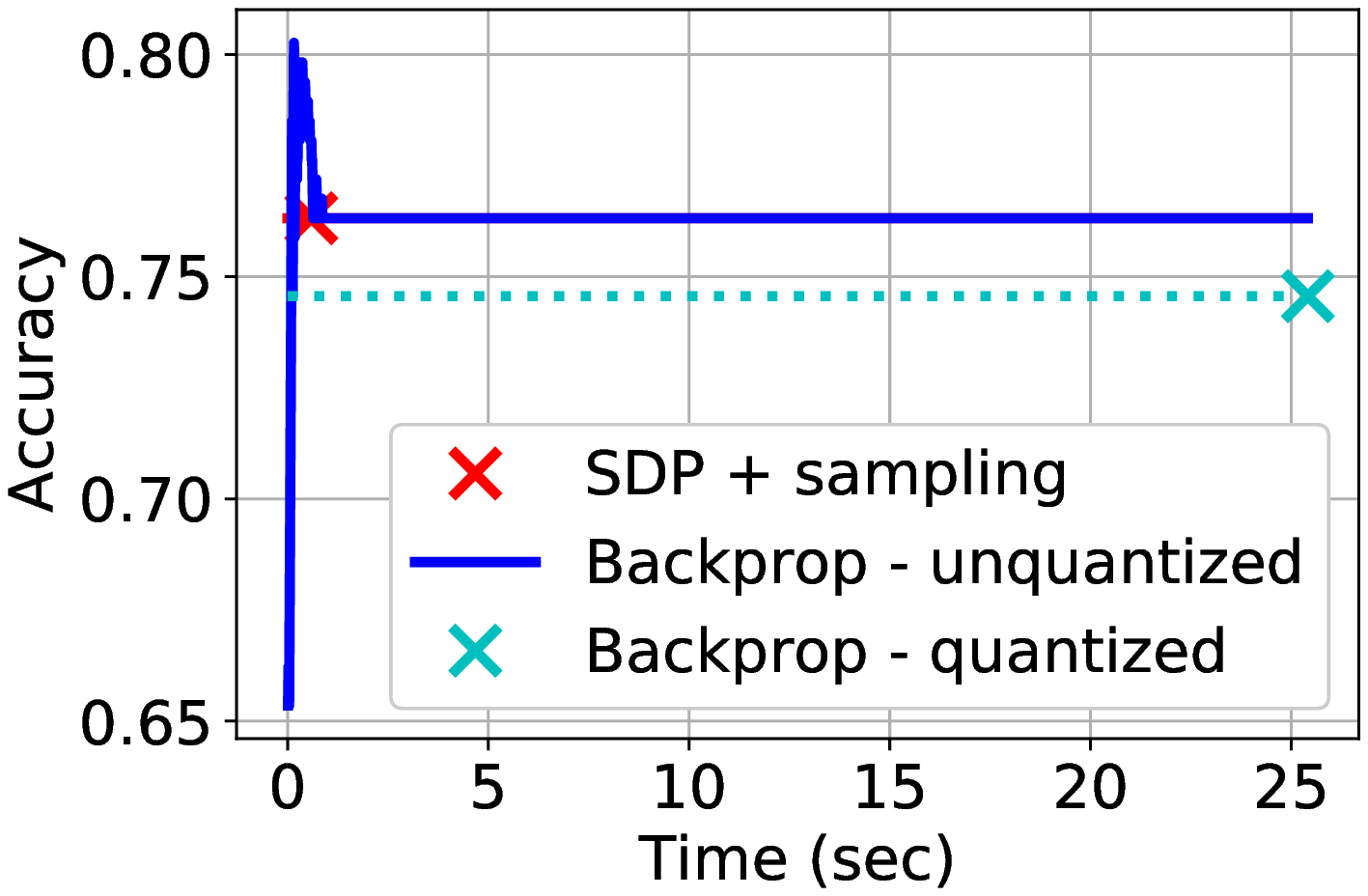}}
  \centerline{a) Training accuracy}
\end{minipage}
\hfill
\begin{minipage}[b]{0.48\textwidth}
\centering
  \centerline{\includegraphics[width=0.9\columnwidth]{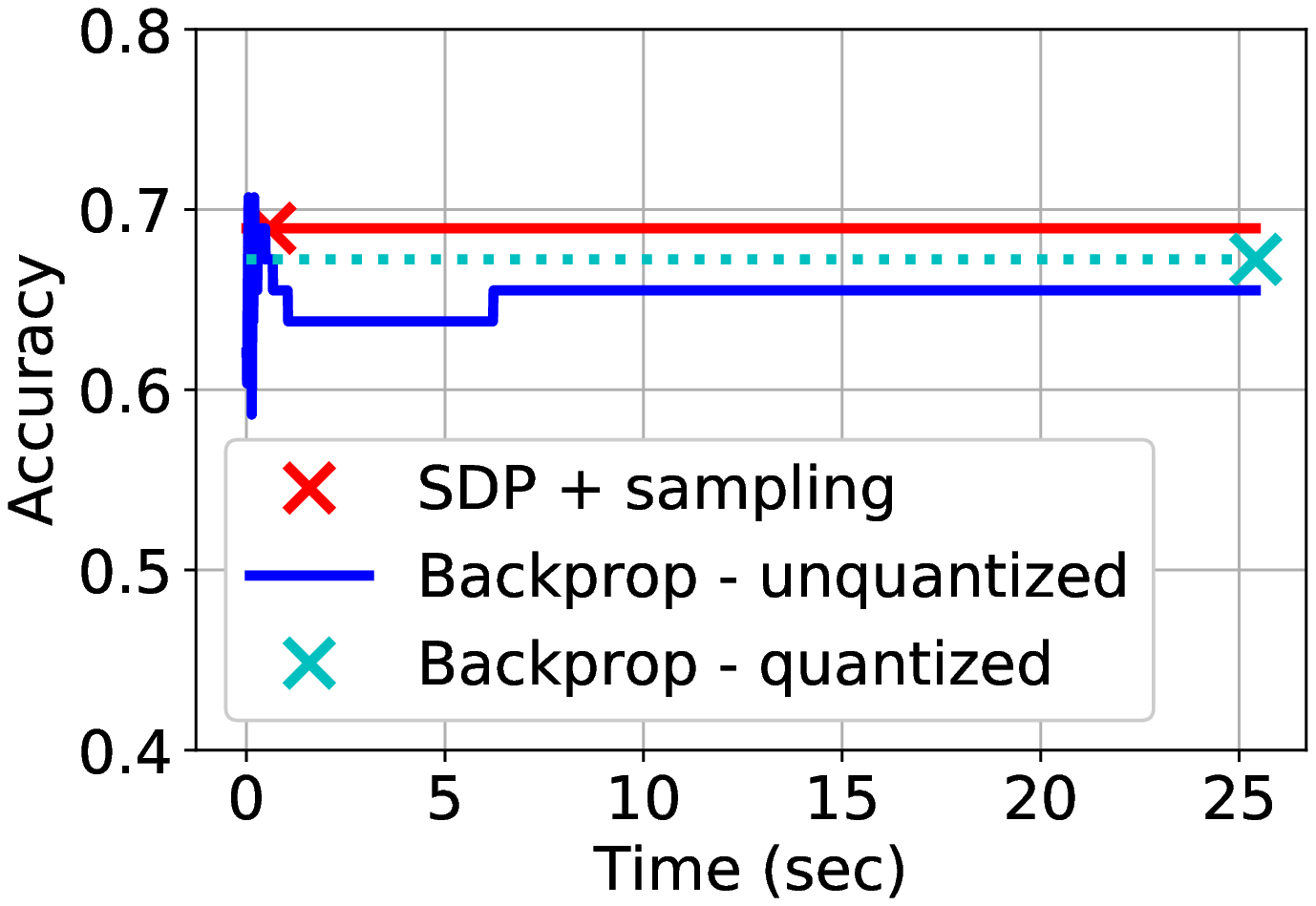}}
  \centerline{b) Test accuracy} 
\end{minipage}
\caption{Classification accuracy against wall-clock time. Breast cancer dataset with $n=228,d=9$. The number of neurons is $m=250$ and the regularization coefficient is $\beta=0.1$ for the SDP based method and $\beta=0.1$ for the backpropagation.}
\label{fig:realdata_exp}
\end{figure}

Figure \ref{fig:realdata_exp} shows the training and test accuracy curves for backpropagation without quantization by the blue solid curve. After the convergence of the backpropagation, we quantize the weights as described in the previous subsection, and the timing and accuracy for the quantized model are indicated by the cyan cross marker. The timing and accuracy of the SDP based method are shown using the red cross marker. Figure \ref{fig:realdata_exp} demonstrates that the SDP based method requires less time to return its output. We observe that quantization reduces the accuracy of backpropagation to a lower accuracy than the SDP based method's accuracy.

It is important to note that in neural network training, since the optimization problems are non-convex, it takes considerable effort and computation time to determine the hyperparameters that will achieve convergence and good performance. For instance, among the hyperparameters that require tuning is the learning rate (i.e. step size). We have performed the learning rate tuning for the backpropagation algorithm offline and hence it is not reflected in Figure \ref{fig:realdata_exp}. Remarkably, the proposed convex SDP based method does not require this step as it is readily handled by the convex SDP solver.

Figure \ref{fig:realdata_exp_ionosphere} shows results for the UCI repository dataset "ionosphere". This is a binary classification dataset with $n=280$ training samples and $71$ test samples. The samples are $d=33$ dimensional. The experiment setting is similar to Figure \ref{fig:realdata_exp} with the main difference that the number of neurons is $10$ times higher (i.e., $m=2500$). We observe that the SDP based method outperforms the quantized network trained with backpropagation on both training and test sets.

\begin{figure} [ht]
\begin{minipage}[b]{0.48\textwidth}
\centering
  \centerline{\includegraphics[width=0.9\columnwidth]{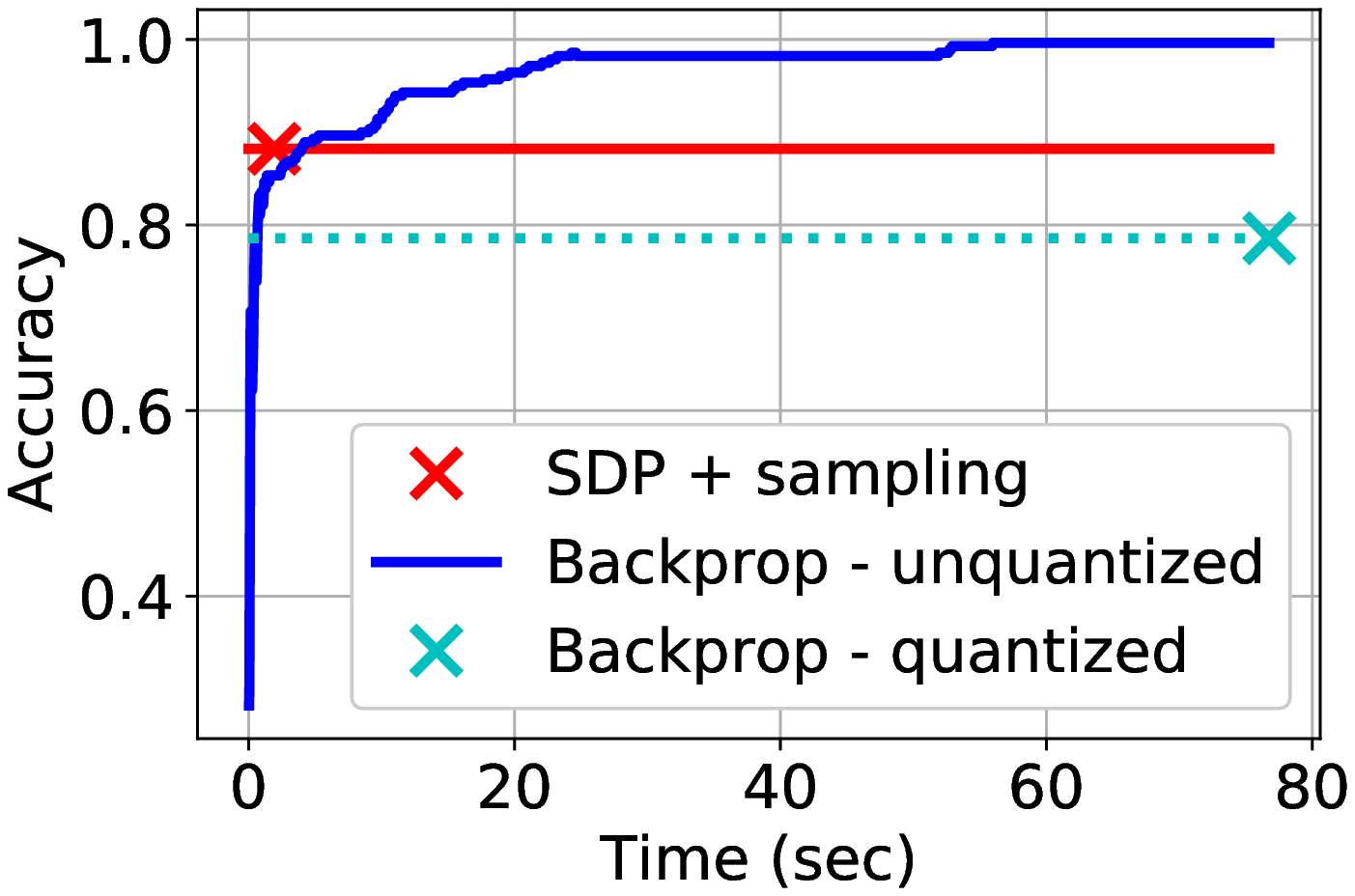}}
  \centerline{a) Training accuracy}
\end{minipage}
\hfill
\begin{minipage}[b]{0.48\textwidth}
\centering
  \centerline{\includegraphics[width=0.9\columnwidth]{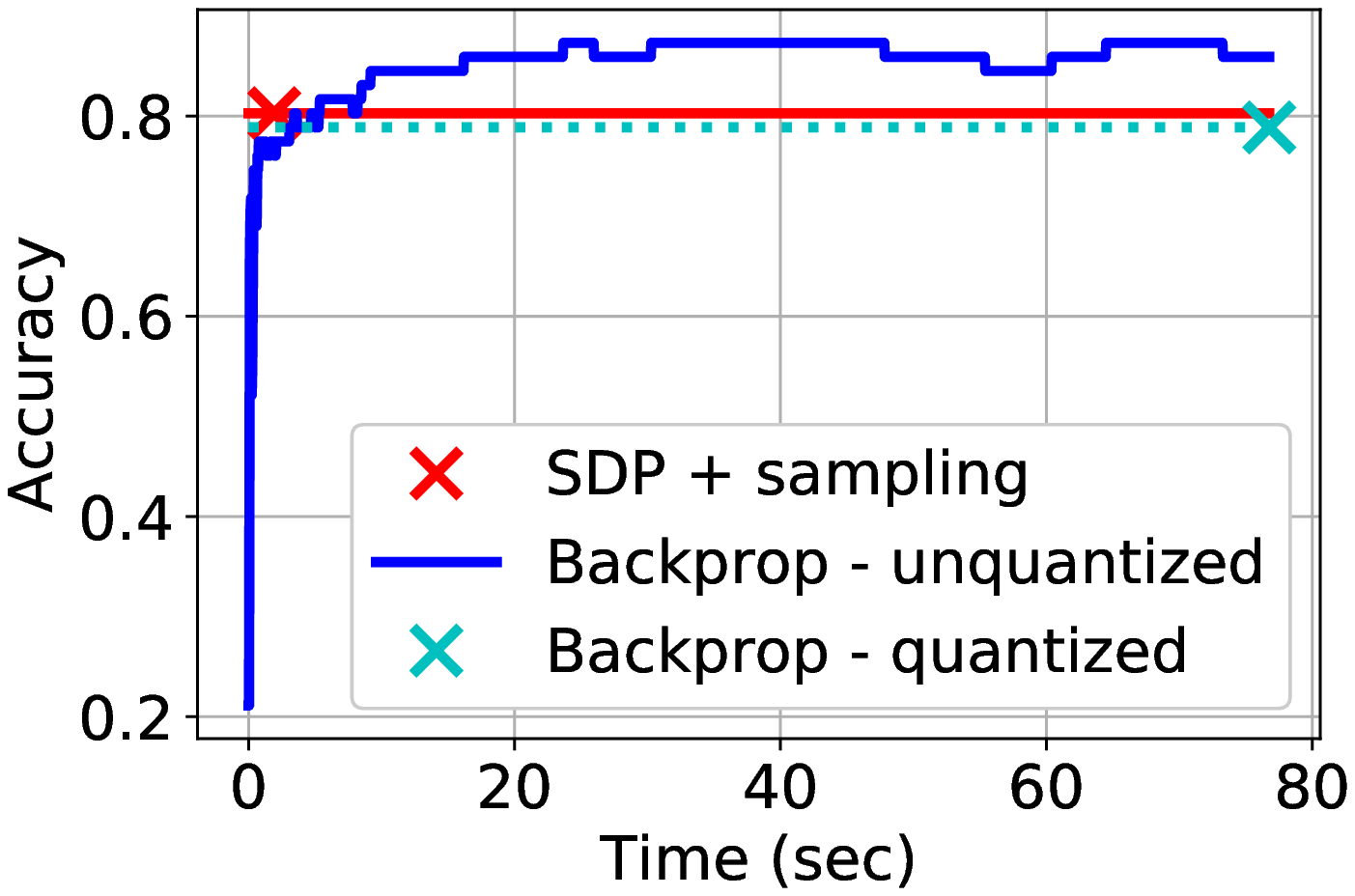}}
  \centerline{b) Test accuracy}
\end{minipage}
\caption{Classification accuracy against wall-clock time. Ionosphere dataset with $n=280,d=33$. The number of neurons is $m=2500$ and the regularization coefficient is $\beta=10$ for the SDP based method, $\beta=10^{-6}$ for backpropagation.}
\label{fig:realdata_exp_ionosphere}
\end{figure}
\section{Conclusion} \label{sec:conclusion}

We introduced a convex duality based approach for training optimal quantized neural networks with degree two polynomial activations. We first presented a lower-bounding semidefinite program which is tractable in polynomial time. We also introduced a bilinear activation architecture, and the corresponding SDP lower-bound. We showed that bilinear architectures with binary quantization are sufficient to train optimal multi-level quantized networks with polynomial activations. We presented a sampling algorithm to generate quantized neural networks using the SDP by leveraging Grothendieck's identity and the connection to approximating the cut norm. Remarkably, we showed that mild overparameterization is sufficient to obtain a near-optimal quantized neural network via the SDP based sampling approach. Numerical experiments show that our method can generate significantly more accurate quantized neural networks compared to the standard post-training quantization approach. Moreover, the convex optimization solvers are faster than backpropagation in small to medium scale problems.

An immediate open question is to extend our results to deeper networks and different architectures, such as ReLU networks. For instance, our algorithm can be applied with polynomial approximations of ReLU. Moreover, one can apply our algorithm layerwise to optimally quantize a pre-trained neural network by knowledge distillation.

We acknowledge that our current numerical results are limited to small and medium datasets due to the memory constraints of standard SDP solvers. However, one can design custom optimization methods to obtain approximate solutions of the SDP for larger dimensional instances. The SDPs can also be defined and solved in deep learning frameworks with appropriate parameterizations. Random projection and sketching based optimizers for high-dimensional convex programs \cite{yurtsever2021scalable,yurtsever2017sketchy, lacotte2020adaptive} and randomized preconditioning \cite{lacotte2020limiting, lacotte2020optimal, lacotte2020effective, ozaslan2019iterative, lacotte2021fast} can address these computational challenges.  We leave this as an important open problem.

From a complexity theoretical perspective, it is remarkable that overparameterization breaks computational barriers in combinatorial and non-convex optimization. Specifically, it is straightforward to show that training a quantized neural network when $m=1$, i.e., a single neuron is NP-hard due to the connection to the MaxCut problem. However, allowing $m=\mathcal{O}(d\log d)$ enables optimization over a combinatorial search space in polynomial time. Exploring the other instances and limits of this phenomenon is another interesting research direction.

\section*{Acknowledgments}
This work was partially supported by the National Science Foundation under grants IIS-1838179 and ECCS-2037304, Facebook Research, Adobe Research and Stanford SystemX Alliance.

\bibliography{refs,refsRTML}

\ifthenelse{\boolean{arxivVersion}}
{
\bibliographystyle{plain}
}{
\bibliographystyle{icml2021}
}

\newpage
\appendix
\onecolumn


\section{Proofs}

\subsection{Proof of Theorem \ref{thm:reduction_to_binary}}
\begin{proof}
First we show that the multiplication of the input $x \in \mathbb{R}^d$ by a multi-level quantized weight vector $q \in \mathcal{Q}_{M}^d$ can be represented by the dot product of a function of the input, i.e., $\tilde{x}$ and a binary quantized weight vector $u$, that is, $q^Tx=u^T \tilde{x}$. Here, $u$ is a binary vector of size $dM$ with entries satisfying
\begin{align} \label{eq:q_i_summation}
    q_i := \sum_{k=1}^M u_{k+(i-1)M},\, i=1,\dots,d \,.
\end{align}
For instance, for $M=4$, we have $q_1=u_1+u_2+u_3+u_4$. Note that because $u_j$'s are from the set $\{-1,+1\}$, we have that $q_1 \in \{-4,-2,0,2,4\}$, which is equal to the set for $(4+1=5)$-level quantization, i.e., $\mathcal{Q}_4$. The second entry of the $q$ vector similarly satisfies $q_2=u_5+u_6+u_7+u_8 \in \mathcal{Q}_4$. The same holds for all the entries $q_1,\dots,q_d$. 

Next, plugging in \eqref{eq:q_i_summation} in the dot product $q^Tx$ yields
\begin{align}
    q^Tx &= \sum_{i=1}^d q_i x_i = \sum_{i=1}^d \sum_{k=1}^M u_{k+(i-1)M} x_i \nonumber \\ 
    &= \sum_{i=1}^d \sum_{k=1}^M u_{k+(i-1)M} \tilde{x}_{k+(i-1)M} \nonumber \\
    &= u^T \tilde{x}
\end{align}
where we defined $\tilde{x} := \begin{bmatrix} x_1,x_1,\dots,x_1,x_2,x_2,\dots,x_2,\dots,x_d,x_d,\dots,x_d \end{bmatrix}^T \in \mathbb{R}^{dM}$. This shows that the dot product $q^Tx$ is equal to the dot product $u^T \tilde{x}$ where $u$ is a $dM$-dimensional vector with binary entries.

The input-output relationship for the two-layer fully connected neural network with polynomial activation is $f(x) = \sum_{j=1}^m \sigma(x^Tq_j) \alpha_j = \sum_{j=1}^m \left( aq_j^T xx^Tq_j + bq_j^Tx + c \right)\alpha_j$ where $q_j \in \mathcal{Q}_M^d$ and $\alpha_j \in \mathbb{R}$, $j=1,\dots,m$. Using the fact that we can represent a dot product with multi-level quantized weights as a dot product with binary quantized weights, we equivalently have
\begin{align}
    f(x) = \sum_{j=1}^m \left( au_j^T \tilde{x}\tilde{x}^Tu_j + bu_j^T\tilde{x} + c \right)\alpha_j \,.
\end{align}
We can rewrite this as a neural network with quadratic activation:
\begin{align}
    f(x) &= \sum_{j=1}^m \begin{bmatrix} u_j^T & 1 \end{bmatrix} \begin{bmatrix} a\tilde{x}\tilde{x}^T & \frac{b}{2}\tilde{x} \\ \frac{b}{2}\tilde{x}^T & c \end{bmatrix} \begin{bmatrix} u_j \\ 1 \end{bmatrix} \alpha_j \nonumber \\
    &= \sum_{j=1}^m \tilde{u}_j^T \mathcal{X} \tilde{u}_j \alpha_j
\end{align}
where we have defined $\tilde{u}_j \in \{-1,+1\}^{dM+1}$, $j=1,\dots,m$, and $\mathcal{X} \in \mathbb{R}^{(dM+1)\times (dM+1)}$. 

This representation can be seen as a bilinear activation network with $u_j^\prime=u_j$ and $v_j^\prime=u_j$, $j=1,\dots,m$. The proof of the converse follows from the symmetrization identity \eqref{eq:symmetrization_identity}. 
\end{proof}

\subsection{Proof of Theorem \ref{thm:main_v2}} \label{sec:proof_main_thm}

\begin{proof}
We begin by applying the matrix
Bernstein concentration bound on the matrices $(u_jv_j^T - \Exs[u_jv_j^T])$, $j=1,\dots,m$, which we note are $(d\times d)$-dimensional zero-mean i.i.d. matrices. We obtain the following upper bound on the spectral norm of these matrices
\begin{align}
    \| u_jv_j^T - \Exs[u_jv_j^T] \| &\leq \|u_jv_j^T\|_2 + \|\Exs[u_jv_j^T]\|_2 \nonumber \\
    &\leq \|u_jv_j^T\|_2 + \Exs[\|u_jv_j^T\|_2] \nonumber \\
    &= \|u_j\|_2 \|v_j\|_2 + \Exs[\|u_j\|_2 \|v_j\|_2] \nonumber \\
    &\leq d + d = 2d\,,
\end{align}
for $j = 1,\dots, m$ where we use the triangle inequality in the first line and Jensen's inequality in the second line. Next, we define $S_j := u_jv_j^T - \Exs[u_jv_j^T]$ and $S := \sum_{j=1}^m S_j$, then the matrix variance of the sum (which we will plug in the matrix concentration bound formula) is given by
\begin{align}
    \sigma^2 &= \max\{ \| \Exs[ SS^T ] \|_2 , \| \Exs[ S^TS ] \|_2\} = \max\left\{ \left\| \sum_{j=1}^m \Exs[ S_jS_j^T ] \right\|_2 , \left\| \sum_{j=1}^m \Exs[ S_j^TS_j ] \right\|_2 \right\}
\end{align}
where the second equality follows because $S_j$'s are zero-mean.
\begin{align}
    \Exs[S_jS_j^T] &= \Exs\left[\left(u_jv_j^T - \Exs[u_jv_j^T] \right) \left(u_jv_j^T - \Exs[u_jv_j^T] \right)^T\right] \nonumber \\
    &= d \Exs[u_ju_j^T] - \Exs[u_jv_j^T]\Exs[v_ju_j^T] \nonumber \\
    &= d \Exs[u_ju_j^T]- (2\gamma / \pi)^2 Z_s^*{Z_s^*}^T \nonumber \\
    &= d \Exs[u_ju_j^T]- (2\gamma / \pi Z_s^*)^2 \,.
\end{align}
Next, we bound the spectral norm of $\Exs[ SS^T ]$ as
\begin{align}
    \| \Exs[ SS^T ] \|_2 = \left\| \sum_{j=1}^m \Exs[ S_jS_j^T ] \right\|_2 &= \left\|\sum_{j=1}^m \left(d \Exs[u_ju_j^T]- (2\gamma / \pi Z_s^*)^2\right) \right\|_2 \nonumber \\
    &= \left\| md \Exs[u_1u_1^T] - m(2\gamma / \pi Z_s^*)^2 \right\|_2 \nonumber \\
    &\leq md \left\|\Exs[u_1u_1^T]\right\|_2 + \left\|m(2\gamma / \pi Z_s^*)^2 \right\|_2 \nonumber \\
    &= md \left\|\Exs[u_1u_1^T]\right\|_2 + m(2\gamma / \pi)^2 \|Z_s^*\|_2^2 \nonumber \\
    &= md (2\gamma/\pi)\|\arcsin(Q_{(11)})\|_2 + m(2\gamma / \pi)^2 \|Z_s^*\|_2^2 \,.
\end{align}
The last line follows from the identity $\Exs[u_1u_1^T] = 2\gamma / \pi\arcsin(Q_{(11)})$.
We note that the upper bound for $\| \Exs[ SS^T ] \|_2$ is also an upper bound for $\| \Exs[ S^TS ] \|_2$. Hence, the matrix variance is upper bounded by $\sigma^2 \leq c^\prime md + m(2\gamma / \pi)^2 \|Z_s^*\|_2^2$ where $c^\prime \geq 0$ is a constant. Applying the matrix Bernstein concentration bound yields
\begin{align}
     \mathbb{P} \left[ \left \| \sum_{j=1}^m (u_jv_j^T - \Exs [u_jv_j^T]) \right\|_2 \ge m\epsilon \right] \leq 2d \exp\left(\frac{-m^2\epsilon^2}{\sigma^2 + 2dm\epsilon/3} \right) \,.
\end{align}
Plugging in the expression for the variance, we obtain
\begin{align}
    \mathbb{P} \left[ \left \| \frac{1}{m }\sum_{j=1}^m u_jv_j^T - \Exs [u_1v_1^T] \right\|_2 \ge \epsilon \right] &\leq 2d \exp\left(\frac{-m^2\epsilon^2}{c^\prime md + m(2\gamma / \pi)^2 \|Z_s^*\|_2^2 + 2dm\epsilon/3} \right) \nonumber \\
    &= 2d\exp\left( - \frac{m\epsilon^2}{(2\gamma / \pi)^2 \|Z_s^*\|_2^2 + d(c^\prime + 2\epsilon/3)}\right) \nonumber \\
    &= \exp\left( - \frac{m\epsilon^2}{(2\gamma / \pi)^2 \|Z_s^*\|_2^2 + d(c^\prime + 2\epsilon/3)}+\log(2d)\right).
\end{align}
Let us denote the optimal solution of the original non-convex problem as $Z_{nc}^*=\sum_{j=1}^m u_j^*(v_j^*)^T\alpha_j^*$ where the weights $u_j^*, v_j^* \in \{-1,+1\}^d, \alpha^*_j \in \mathbb{R}$, $j=1,\dots,m$ are optimal network parameters for the non-convex combinatorial problem in  \eqref{eq:nonconvexbilinear} . Solving the SDP gives us an unquantized solution $Z^*$ and via the sampling algorithm, we obtain the quantized solution given by $\hat{Z}=\sum_{j=1}^m \hat{u}_j \hat{v}_j^T\hat{\alpha}_j$. 

We now introduce some notation. We will denote the loss term in the objective by $L(Z)$ and the regularization term by $R(Z)$, that is,
\begin{align}
    L(Z) := \ell\left( \begin{bmatrix} x_1^TZx_1 \\ \vdots \\ x_n^TZx_n \end{bmatrix},\, y \right) , \quad R(Z) := d\sum_{j=1}^m |\alpha_j| \quad \mbox{when} \quad Z = \sum_{j=1}^m u_jv_j^T\alpha_j.
\end{align}
We now bound the difference between the losses for the unquantized solution of the SDP, i.e., $Z^*$, and the quantized weights $\hat{Z} = \sum_{j=1}^m \hat{u}_j \hat{v}_j^T \hat{\alpha}_j$:
\begin{align}
    |L(\hat{Z}) - L(Z^*)| \leq L_c \left\|  \begin{bmatrix}  x_1^T (\sum_{j=1}^m \hat{u}_j \hat{v}_j^T \frac{\rho^* \pi}{\gamma m} - 2Z^*) x_1 \\ \vdots \\  x_n^T (\sum_{j=1}^m \hat{u}_j \hat{v}_j^T \frac{\rho^* \pi}{\gamma m} - 2Z^*) x_n \end{bmatrix} \right\|_{\infty} 
\end{align}
%
where we substituted $\hat{\alpha}_j = \rho^* \frac{\pi}{\gamma m}$. The scaling factor of $2$ in front of $Z^*$ is due to the scaling factor in the SDP, i.e., $\hat{y}_i=2x_i^TZx_i$. 
Plugging in $Z^* / \rho^* = Z_s^*=\frac{\pi}{2\gamma}\Exs[u_1v_1^T]$ yields 
\begin{align}
    |L(\hat{Z}) - L(Z^*)| &\leq L_c \left\| \frac{\rho^* \pi}{\gamma}  \begin{bmatrix} x_1^T (\frac{1}{m}\sum_{j=1}^m \hat{u}_j \hat{v}_j^T - \Exs[u_1v_1^T]) x_1 \\ \vdots \\ x_n^T (\frac{1}{m}\sum_{j=1}^m \hat{u}_j \hat{v}_j^T  - \Exs[u_1v_1^T]) x_n \end{bmatrix} \right\|_{\infty} \nonumber \\
    &= L_c \frac{\rho^* \pi}{\gamma} \max_{i=1,\dots,n} \big| x_i^T (\frac{1}{m}\sum_{j=1}^m \hat{u}_j \hat{v}_j^T - \Exs[u_1v_1^T]) x_i \big| \nonumber \\
    &\leq L_c \frac{\rho^* \pi}{\gamma} \max_{i=1,\dots,n} (\epsilon \|x_i\|_2^2) =  L_c \frac{\rho^* \pi}{\gamma} \epsilon R_m^2
\end{align}
which holds with probability at least $1-\exp\left( - \frac{m\epsilon^2}{(2\gamma / \pi)^2 \|Z_s^*\|_2^2 + d(c^\prime + 2\epsilon/3)}+\log(2d)\right)$ as a result of the matrix Bernstein concentration bound.
Therefore, when the number of sampled neurons satisfies the inequality
\begin{align*}
     \frac{m\epsilon^2}{(2\gamma / \pi)^2 \|Z^*\|_2^2 + d(c^\prime + 2\epsilon/3)}\ge 2\log(2d) \,,
\end{align*}
this probability is at least $1-\exp(-\log(2d))=1-\exp(-C \epsilon^2 m/d)$, where $C>0$ is a constant independent of $d$, $m$ and $\epsilon$.

Next, we obtain upper and lower bounds on the non-convex optimal value. Since the SDP solution provides a lower bound, and the sampled quantized network provides an upper bound, we can bound the optimal value of the original non-convex problem as follows
\begin{align} \label{eq:bounds_nonconvex}
    L(\hat{Z}) + \beta R(\hat{Z}) \geq L(Z_{nc}^*) + \beta R(Z_{nc}^*) \geq L(Z^*) + \beta R(Z^*) \,.
\end{align}
We have already established that $|L(\hat{Z}) - L(Z^*)| \leq \frac{\rho^* \pi}{\gamma} L_cR_m^2\epsilon$ with high probability. It follows
\begin{align}
    L(\hat{Z}) - L(Z_{nc}^*) &= L(\hat{Z}) - L(Z^*) + L(Z^*) - L(Z_{nc}^*) \nonumber \\ 
    &\leq \frac{\rho^* \pi}{\gamma}L_cR_m^2\epsilon + L(Z^*) - L(Z_{nc}^*) \nonumber \\
    &\leq \frac{\rho^* \pi}{\gamma}L_cR_m^2\epsilon + \beta R(Z_{nc}^*)
\end{align}
where we have used \eqref{eq:bounds_nonconvex} and that $R(Z^*) \geq 0$ to obtain the last inequality. Furthermore, \eqref{eq:bounds_nonconvex} implies that $L(Z_{nc}^*) - L(\hat{Z}) \leq \beta R(\hat{Z})$. 
If we pick the regularization coefficient $\beta$ such that it satisfies $\beta \leq \frac{\frac{\rho^* \pi}{\gamma} L_cR_m^2\epsilon}{R(Z_{nc}^*)}$ and $\beta \leq \frac{\frac{\rho^* \pi}{\gamma} L_cR_m^2\epsilon}{R(\hat{Z})}$, we obtain the following approximation error bound
\begin{align}
    |L(Z_{nc}^*) - L(\hat{Z})| \leq 2\frac{\rho^* \pi}{\gamma} L_cR_m^2\epsilon \,.
\end{align}
Rescaling $\epsilon$ by $2\frac{\rho^* \pi}{\gamma} L_cR_m^2$, i.e., replacing $\epsilon$ with $\frac{1}{2\frac{\rho^* \pi}{\gamma} L_cR_m^2}\epsilon$, we obtain the claimed approximation result.
\end{proof}

\subsection{Duality Analysis for Bilinear Activation} \label{sec:dual_analysis_bilinear}
This subsection has the details of the duality analysis that we have carried out to obtain the SDP in \eqref{eq:sdpbilinear} for the bilinear activation architecture. The derivations follow the same strategy as the duality analysis in Section \ref{sec:duality}. The non-convex problem for training such a network is stated as follows:
\begin{align}
    p^*_b = &\min_{\mbox{s.t.}\, u_j,v_j\in\{-1,1\}^d,\alpha_j\in\mathbb{R}\,\forall j\in[m]}  \ell \left(\sum_{j=1}^m ((Xu_j) \circ (Xv_j)) \alpha_j, \, y \right) + \beta d \sum_{j=1}^m \vert\alpha_j\vert \,.
\end{align}
Taking the convex dual with respect to the second layer weights $\{\alpha_j\}_{j=1}^m$, the optimal value of the primal is lower bounded by
\begin{align}
    p^* \geq d^* = \max_{\max_{u,v\in\{-1,1\}^d} |\nu^T ((Xu)\circ (Xv))| \leq \beta d} -\ell^*(-\nu)
\end{align}
where $\nu \in \mathbb{R}^n$ is the dual variable. 

The constraint $\max_{u,v\in\{-1,1\}^d} |\nu^T ((Xu)\circ (Xv))| \leq \beta d$ can be equivalently stated as the following two inequalities
\begin{align}
    q_1^* &= \max_{u_i^2=v_i^2=1,\forall i} u^T \left(\sum_{i=1}^n \nu_ix_ix_i^T\right) v \leq \beta d \,,  \nonumber \\
    q_2^* &= \max_{u_i^2=v_i^2=1,\forall i} u^T \left(-\sum_{i=1}^n \nu_ix_ix_i^T\right) v \leq \beta d.
\end{align}
We note that the second constraint $q_2^* \leq \beta d$ is redundant since the change of variable $u \leftarrow -u$ in the first constraint leads to the second constraint:
\begin{align}
   q_1^*= \max_{u_i^2=v_i^2=1,\forall i} u^T \left(\sum_{i=1}^n \nu_ix_ix_i^T\right) v =\max_{(-u_i)^2=v_i^2=1,\forall i} (-u)^T \left(\sum_{i=1}^n \nu_ix_ix_i^T\right) v = \max_{u_i^2=v_i^2=1,\forall i} u^T \left(-\sum_{i=1}^n \nu_ix_ix_i^T\right) v = q_2^*.
\end{align}
In the sequel, we remove the redundant constraint $q_2^* \leq \beta d$.
The SDP relaxation for the maximization $\max_{u_i^2=v_i^2=1,\forall i} u^T \left(\sum_{i=1}^n \nu_ix_ix_i^T\right) v$ is given by (see, e.g., \cite{alon2004approximating})
\begin{align} \label{eq:q_1_hat}
    \hat{q}_1 = \max_{K=\begin{bmatrix} V & Z \\ Z^T & W \end{bmatrix} \succeq 0,\, K_{jj}=1,\forall j} \tr\left( \sum_{i=1}^n \nu_i x_ix_i^T Z \right) \,.
\end{align}
%



\noindent The dual of the above SDP relaxation can be derived via standard convex duality theory, and can be stated as
\begin{align}
    \min_{z_1,z_2\mbox{ s.t. }\ones^Tz_1+\ones^Tz_2=0}  2d\, \lambda_{\max}\left(\begin{bmatrix} \diag(z_1) & \sum_{i=1}^n \nu_i x_ix_i^T  \\ \sum_{i=1}^n \nu_i x_ix_i^T  & \diag(z_2) \end{bmatrix}\right).
\end{align}
%

\noindent Then, we arrive at
\begin{align}
    d^* \geq d_{SDP} := \max_{\nu,z_1,z_2} \quad &-\ell^*(-\nu) \nonumber \\
    \mbox{s.t.} \quad &\begin{bmatrix} \diag(z_1) & \sum_{i=1}^n \nu_i x_ix_i^T  \\ \sum_{i=1}^n \nu_i x_ix_i^T  & \diag(z_2) \end{bmatrix} - \frac{\beta}{2} I \preceq 0 \nonumber \\
    & \ones^Tz_1+\ones^Tz_2=0 \,. 
\end{align}
Next, we will find the dual of the above problem. The Lagrangian is given by
\begin{align}
    &L(\nu, z_1, z_2, Q, \rho) = \nonumber \\
    &= -\ell^*(-\nu) - \tr\left(Q\begin{bmatrix} \diag(z_1) & \sum_{i=1}^n \nu_i x_ix_i^T  \\ \sum_{i=1}^n \nu_i x_ix_i^T  & \diag(z_2) \end{bmatrix} \right) + \frac{\beta}{2} \tr(Q) + \rho \sum_{j=1}^d (z_{1,j} +z_{2,j}) \nonumber \\
    &= -\ell^*(-\nu) - \sum_{j=1}^d \left(V_{jj}z_{1,j} + W_{jj}z_{2,j}\right)- 2\sum_{i=1}^n \nu_ix_i^TZx_i + \frac{\beta}{2} \tr(Q) + \rho \sum_{j=1}^d (z_{1,j} + z_{2,j})
\end{align} 
%
Maximizing the Lagrangian with respect to $\nu, z_1,z_2$ yields the problem
\begin{align}
    \min_{Q, \rho} \quad &\ell\left(\begin{bmatrix} 2x_1^TZx_1 \\ \vdots \\ 2x_n^TZx_n \end{bmatrix},\, y\right) + \frac{\beta}{2} \tr(Q)\nonumber \\
    \mbox{s.t.}\quad & V_{jj} = \rho, \, W_{jj} = \rho, \,j=1,\dots,d
    \nonumber \\ 
    & Q=\begin{bmatrix} V & Z \\ Z^T & W \end{bmatrix} \succeq 0 \,.
\end{align}

\noindent Finally, we obtain the following more concise form for the convex program
\begin{align}
  \min_{Q, \rho} \quad & \ell \left(\hat{y}, \, y \right) + \beta d \rho \nonumber \\
    \mbox{s.t.} \quad & \hat{y}_i = 2x_i^TZx_i, \, i=1,\dots,n \nonumber \\
    & Q_{jj} = \rho, \, j=1,\dots,2d\nonumber \\
    &Q=\begin{bmatrix} V & Z \\ Z^T & W \end{bmatrix} \succeq 0 \,.
\end{align}

\section{Vector Output Networks} \label{sec:vector_output} 

We will assume the following vector output neural network architecture with bilinear activation
\begin{align}
    f(x) = \sum_{j=1}^m (x^T u_j)(x^T v_j) \alpha_j^T 
\end{align}
where the second layer weights $\alpha_j \in \mathbb{R}^C$, $j=1,\dots,m$ are $C$-dimensional vectors. We note that $f(x):\mathbb{R}^d \rightarrow \mathbb{R}^{1\times C}$. The output of the neural network for all the samples in the dataset can be concisely represented as $\hat{Y} = f(X) \in \mathbb{R}^{n\times C}$. We use $Y \in \mathbb{R}^{n \times C}$ to denote the target matrix. The training problem can be formulated as
\begin{align}
    p^* = \min_{u_j,v_j\in\{-1,1\}^d,\alpha_j\in\mathbb{R}^C\,j\in[m]} \ell \left(\sum_{j=1}^m ((Xu_j) \circ (Xv_j)) \alpha_j^T, \, Y \right) + \beta d \sum_{j=1}^m \|\alpha_j\|_1 \,.
\end{align}
Or,
\begin{align}
    p^* = \min_{u_j,v_j\in\{-1,1\}^d, \,j\in[m]} \min_{\alpha_j \in \mathbb{R}^C , \,j\in[m], \, \hat{Y}} \ell\left(\hat{Y}, Y\right) + \beta d \sum_{j=1}^m \|\alpha_j\|_1 \quad \mbox{s.t.} \quad \hat{Y} = \sum_{j=1}^m ((Xu_j) \circ (Xv_j)) \alpha_j^T \,.
\end{align}
The dual problem for the inner minimization problem is
\begin{align}
    \max_{\nu} -\ell^*(-\nu) \quad \mbox{s.t.} \quad |\nu_k^T ((Xu_j) \circ (Xv_j))| \leq \beta d ,\, \forall j,k \,.
\end{align}
We have introduced the dual variable $\nu \in \mathbb{R}^{n \times C}$ and its columns are denoted by $\nu_k \in \mathbb{R}^n$, $k=1,\dots,C$.
The optimal value of the primal is lower bounded by
\begin{align}
    p^* \geq d^* = \max_{\max_{u, v\in \{-1,1\}^d} |\nu_k^T ((Xu) \circ (Xv))| \leq \beta d \,, \forall k} -\ell^*(-\nu) \,.
\end{align}
The constraints of the above optimization problem can be equivalently stated as the following inequalities
\begin{align}
    q_{1,k}^* &= \max_{u_i^2=v_i^2=1,\forall i} u^T \left(\sum_{i=1}^n \nu_{k,i}x_ix_i^T\right) v \leq \beta d, \,\, k=1,\dots,C,  \nonumber \\
    q_{2,k}^* &= \max_{u_i^2=v_i^2=1,\forall i} u^T \left(-\sum_{i=1}^n \nu_{k,i}x_ix_i^T\right) v \leq \beta d, \,\, k=1,\dots,C \,.
\end{align}
As we have shown in Section \ref{sec:dual_analysis_bilinear}, the second set of inequalities $q_{2,k}^* \leq \beta d$ are implied by the first and hence we remove them. The SDP relaxation for the maximization $\max_{u_i^2=v_i^2=1,\forall i} u^T \left(\sum_{i=1}^n \nu_{k,i}x_ix_i^T\right) v$ is given by
\begin{align}
    \hat{q}_{1,k} = \max_{K=\begin{bmatrix} V & Z \\ Z^T & W \end{bmatrix} \succeq 0,\, K_{jj}=1,\forall j} \tr\left( \sum_{i=1}^n \nu_{k,i} x_ix_i^T Z \right) \,.
\end{align}
We have previously given the dual of this problem as
\begin{align}
    \min_{z_{k,1},z_{k,2}\mbox{ s.t. }\ones^Tz_{k,1}+\ones^Tz_{k,2}=0}  2d\, \lambda_{\max}\left(\begin{bmatrix} \diag(z_{k,1}) & \sum_{i=1}^n \nu_{k,i} x_ix_i^T  \\ \sum_{i=1}^n \nu_{k,i} x_ix_i^T  & \diag(z_{k,2}) \end{bmatrix}\right) \,,
\end{align}
where we define the variables $z_{k,1}\in\mathbb{R}^d, z_{k,2}\in\mathbb{R}^d$, $k=1,\dots,C$. This allows us to establish the following lower bound
\begin{align}
    d^* \geq d_{SDP} := \max_{\nu,\{z_{k,1},z_{k,2}\}_{k=1}^C} \quad &-\ell^*(-\nu) \nonumber \\
    \mbox{s.t.} \quad &\begin{bmatrix} \diag(z_{k,1}) & \sum_{i=1}^n \nu_{k,i} x_ix_i^T  \\ \sum_{i=1}^n \nu_{k,i} x_ix_i^T  & \diag(z_{k,2}) \end{bmatrix} - \frac{\beta}{2} I \preceq 0, \quad k=1,\dots,C \nonumber \\
    & \ones^Tz_{k,1}+\ones^Tz_{k,2}=0, \quad k=1,\dots,C \,.
\end{align}
Next, we find the dual of this problem. First, we write the Lagrangian:
\begin{align}
    &L(\nu, \{z_{k,1}, z_{k,2}, Q_k, \rho_k\}_{k=1}^C) = \nonumber \\
    &= -\ell^*(-\nu) - \sum_{k=1}^C \tr\left(Q_k\begin{bmatrix} \diag(z_{k,1}) & \sum_{i=1}^n \nu_{k,i} x_ix_i^T  \\ \sum_{i=1}^n \nu_{k,i} x_ix_i^T  & \diag(z_{k,2}) \end{bmatrix} \right) + \frac{\beta}{2} \sum_{k=1}^C \tr(Q_k) + \sum_{k=1}^C \rho_k (\ones^Tz_{k,1}+\ones^Tz_{k,2}) \nonumber \\
    &= -\ell^*(-\nu) - \sum_{k=1}^C \left( \diag(V_k)^T z_{k,1} + \diag(W_k)^T z_{k,2} \right) - 2\sum_{k=1}^C \sum_{i=1}^n \nu_{k,i} x_i^TZ_kx_i + \frac{\beta}{2} \sum_{k=1}^C \tr(Q_k) \nonumber \\
    &\quad + \sum_{k=1}^C \rho_k (\ones^Tz_{k,1}+\ones^Tz_{k,2}) \,,
\end{align}
where we have introduced $Q_k=\begin{bmatrix} V_k & Z_k \\ Z_k^T & W_k \end{bmatrix}$.
Maximization of the Lagrangian with respect to $\nu, z_{k,1}, z_{k,2}$, $k=1,\dots,C$ leads to the dual problem given by
\begin{align}
    \min_{\{Q_k, \rho_k\}_{k=1}^C} \quad &\ell\left(\begin{bmatrix} 2x_1^TZ_1x_1 & \dots & 2x_1^TZ_Cx_1 \\ \vdots & \ddots & \vdots \\ 2x_n^TZ_1x_n & \dots & 2x_n^TZ_Cx_n \end{bmatrix},\, Y\right) + \frac{\beta}{2} \sum_{k=1}^C \tr(Q_k) \nonumber \\
    \mbox{s.t.}\quad & V_{k,jj} = \rho_k, \, W_{k,jj} = \rho_k, \quad k \in [C], \,\,  j \in [d]
    \nonumber \\ 
    & Q_k=\begin{bmatrix} V_k & Z_k \\ Z_k^T & W_k \end{bmatrix} \succeq 0, \quad k \in [C].
\end{align}
More concisely,
\begin{align} \label{eq:vector_output_sdp}
    \min_{\{Q_k, \rho_k\}_{k=1}^C} \quad &\ell\left(\hat{Y},\, Y\right) + \beta d \sum_{k=1}^C \rho_k \nonumber \\
    \mbox{s.t.}\quad & \hat{Y}_{ik} = 2x_i^TZ_kx_i,  \quad i \in [n], \,\, k \in [C] \nonumber \\
    & Q_{k,jj} = \rho_k, \quad k \in [C], \,\, j \in [2d]
    \nonumber \\ 
    & Q_k=\begin{bmatrix} V_k & Z_k \\ Z_k^T & W_k \end{bmatrix} \succeq 0, \quad k \in [C].
\end{align}
where $V_k,Z_k,W_k$ are $d\times d$-dimensional matrices.

\subsection{Sampling Algorithm for Vector Output Networks}
We now give the sampling algorithm:

\begin{enumerate}
    \item Solve the SDP in \eqref{eq:vector_output_sdp} and define the matrices $Z_{s,k}^* \leftarrow Z_k^* / \rho_k^*$, $k=1,\dots,C$.
    
    \item Find $Q_k^*$, $k=1,\dots,C$ by solving the problem 
    \begin{align} \label{eq:cov_matrix_Q_problem_vectorout}
        Q_k^* := \arg \min_{Q\succeq 0, Q_{jj}=1\forall j} \|Q_{(12)} -  \sin(\gamma Z_{s,k}^*) \|_F^2 \,.
    \end{align}
    
    \item Carry out the following steps for each $k=1,\dots,C$:
    \begin{enumerate}[label=\alph*.]
        \item Sample $m/C$ pairs of the first layer weights $u_j,v_j$ via $\begin{bmatrix} u_j \\ v_j \end{bmatrix} \sim \sign( \mathcal{N}(0,Q_k^*))$.
        
        \item Set the second layer weights for these neurons to $\alpha_j = \rho_k^* C \frac{\pi}{\gamma m}e_k$ where $e_k \in \mathbb{R}^C$ is the $k$'th unit vector.
    \end{enumerate}
    
    \item (optional) Transform the quantized bilinear activation network to a quantized polynomial activation network as described in Section \ref{sec:lifting}.
\end{enumerate}

Figure \ref{fig:vector_out} shows the classification accuracy on a UCI machine learning repository with $C=4$ classes. We perform one-hot encoding on the output and use the vector output SDP and sampling method developed in this section. We observe that the accuracy of the sampling method approaches the accuracy of the lower bounding SDP as $m$ is increased.

\begin{figure} [ht]
\begin{minipage}[b]{0.48\linewidth}
\centering
  \centerline{\includegraphics[width=0.9\columnwidth]{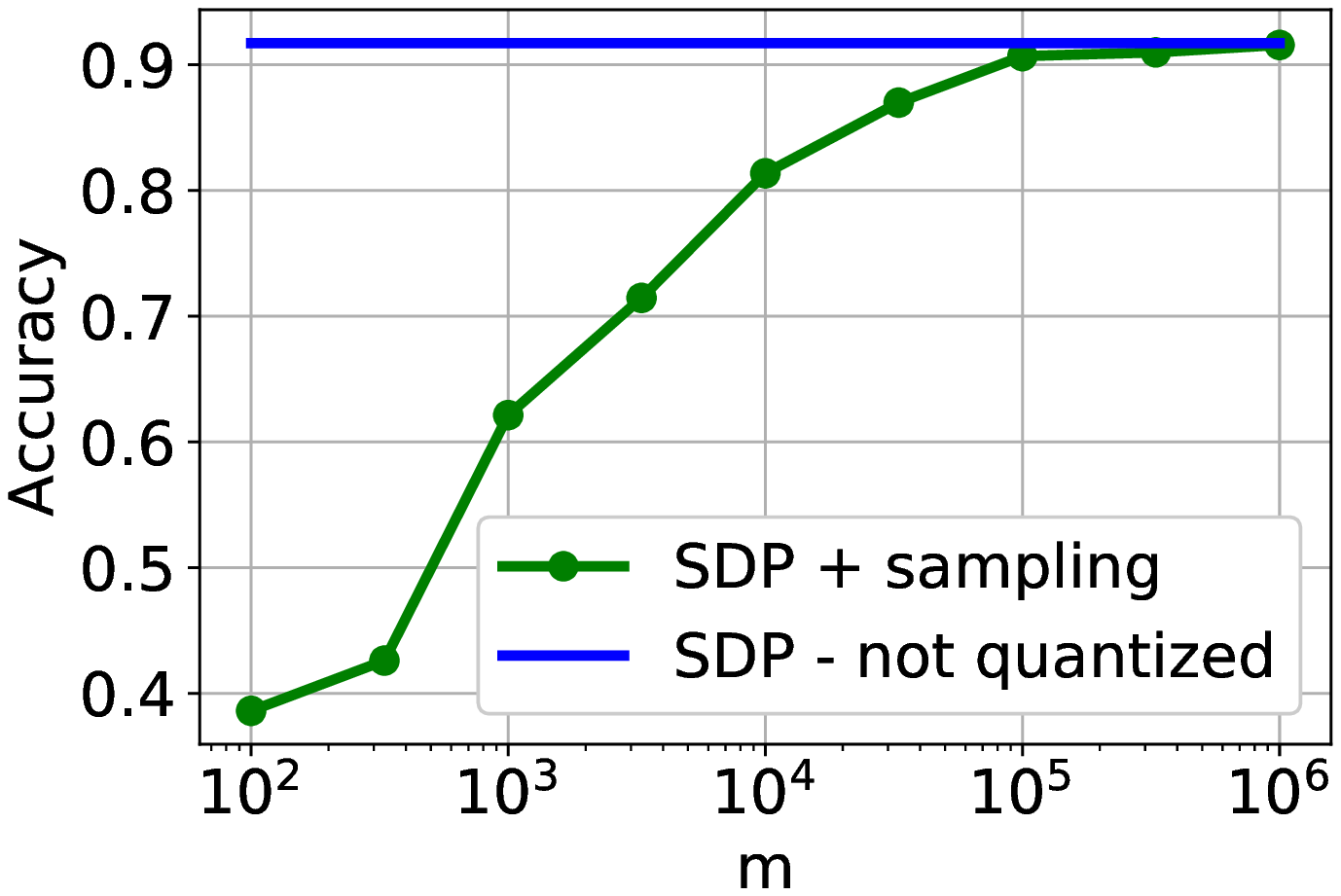}}
  \centerline{a) Training accuracy}
\end{minipage}
\hfill
\begin{minipage}[b]{0.48\linewidth}
\centering
  \centerline{\includegraphics[width=0.9\columnwidth]{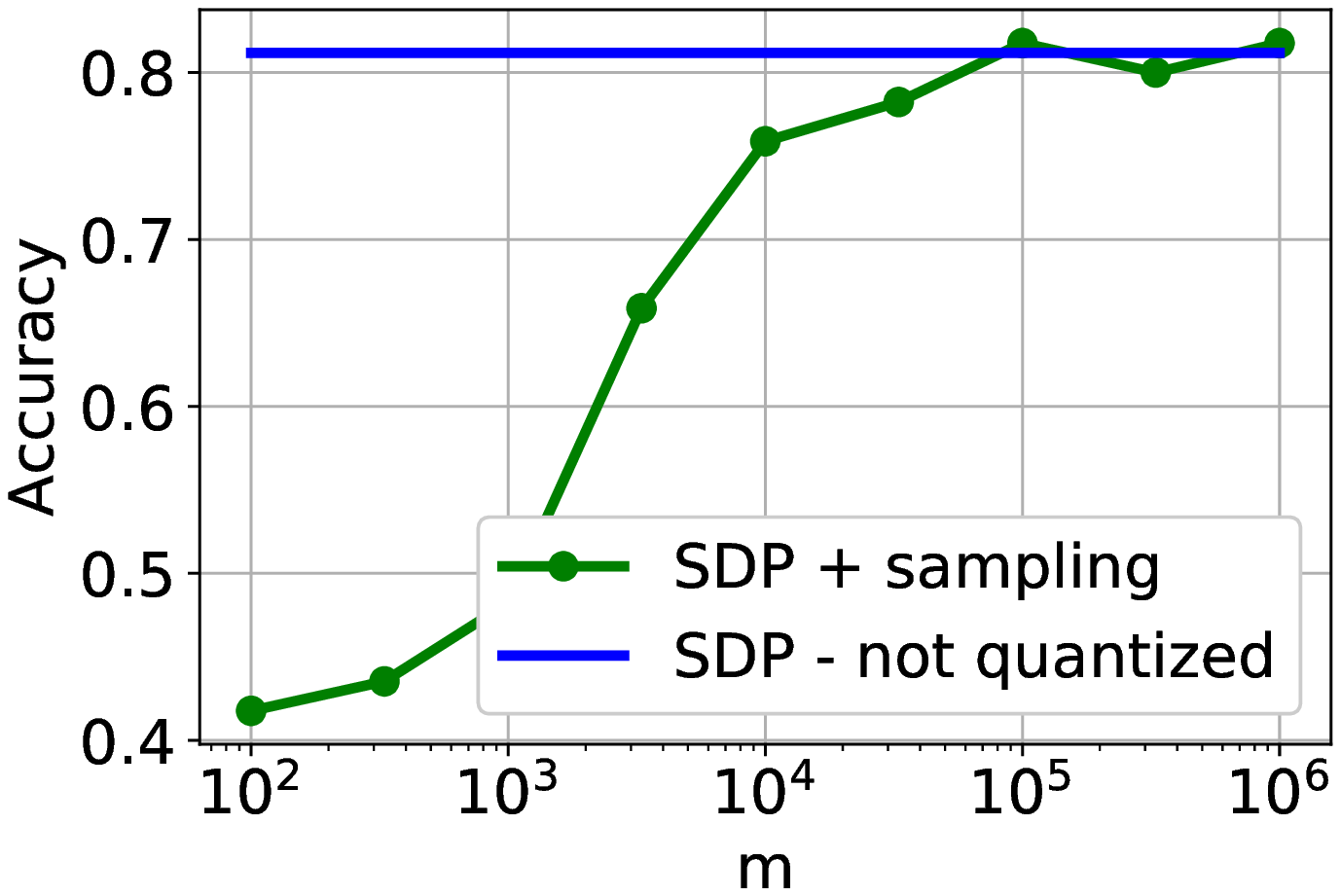}}
  \centerline{b) Test accuracy}
\end{minipage}
\caption{Vector output network experiment showing multiclass classification accuracy against the number of sampled neurons $m$. The dataset is statlog vehicle multiclass with $C=4$ classes and dimensions $n=676,d=18$. The regularization coefficient is $\beta=1$. The blue solid line shows the accuracy when we predict the labels using the lower bounding SDP in \eqref{eq:vector_output_sdp} without quantization. The green curve with circle markers shows the accuracy for the quantized network when we use the sampling method that we designed for the vector output case.}
\label{fig:vector_out}
\end{figure}

\section{Further Details on Step 4 of the Sampling Algorithm}

As stated in Step 4 of the sampling algorithm given in subsection \ref{subsec:sampling_alg}, it is possible to transform the bilinear activation architecture to a quadratic activation neural network with $3m$ neurons. The first layer weights of the quadratic activation network can be obtained, via the symmetrization identity, as $1/2(u_j+v_j) \in \{-1,0,+1\}^d$, $u_j \in \{-1,+1\}^d$, $v_j \in \{-1,+1\}^d$, $j=1,\dots,m$. The second layer weights are picked as stated in Step 3 for the first $m$ neurons and the remaining $2m$ neurons have the opposite sign.

\section{Additional Numerical Results}
Figure \ref{fig:realdata_exp_creditapproval} shows the accuracy against time for the credit approval dataset. For this dataset, we similarly observe shorter run times and better classification accuracies for the SDP based sampling method. Furthermore, increasing the number of neurons (plots c,d) improves the accuracy for both methods, which is in consistency with the experiment result shown in Figure \ref{fig:planted_exp_pos_sec_layer}.

\begin{figure} [ht]
\begin{minipage}[b]{0.48\linewidth}
\centering
  \centerline{\includegraphics[width=0.9\columnwidth]{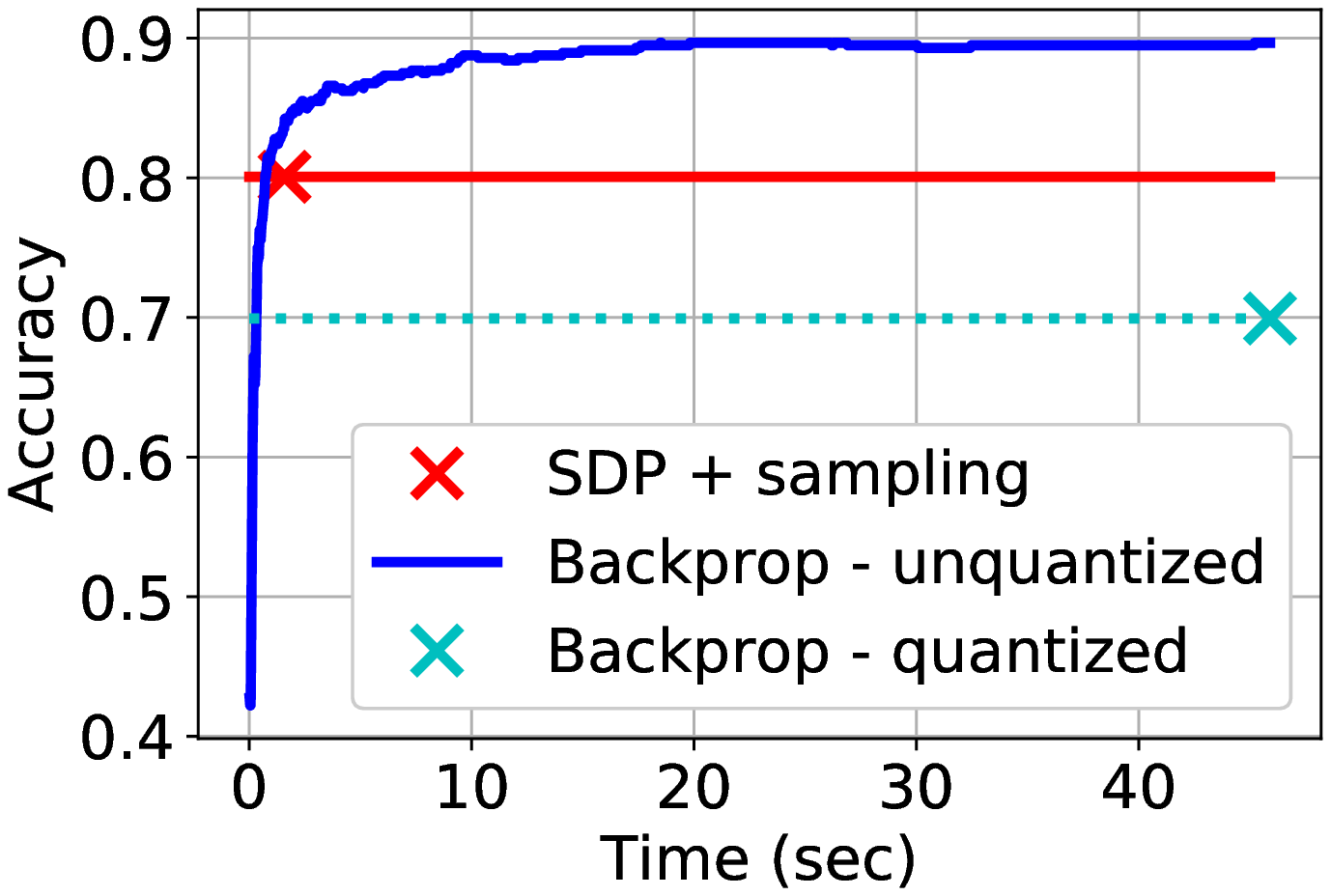}}
  \centerline{a) Training accuracy, $m=500$}
\end{minipage}
\hfill
\begin{minipage}[b]{0.48\linewidth}
\centering
  \centerline{\includegraphics[width=0.9\columnwidth]{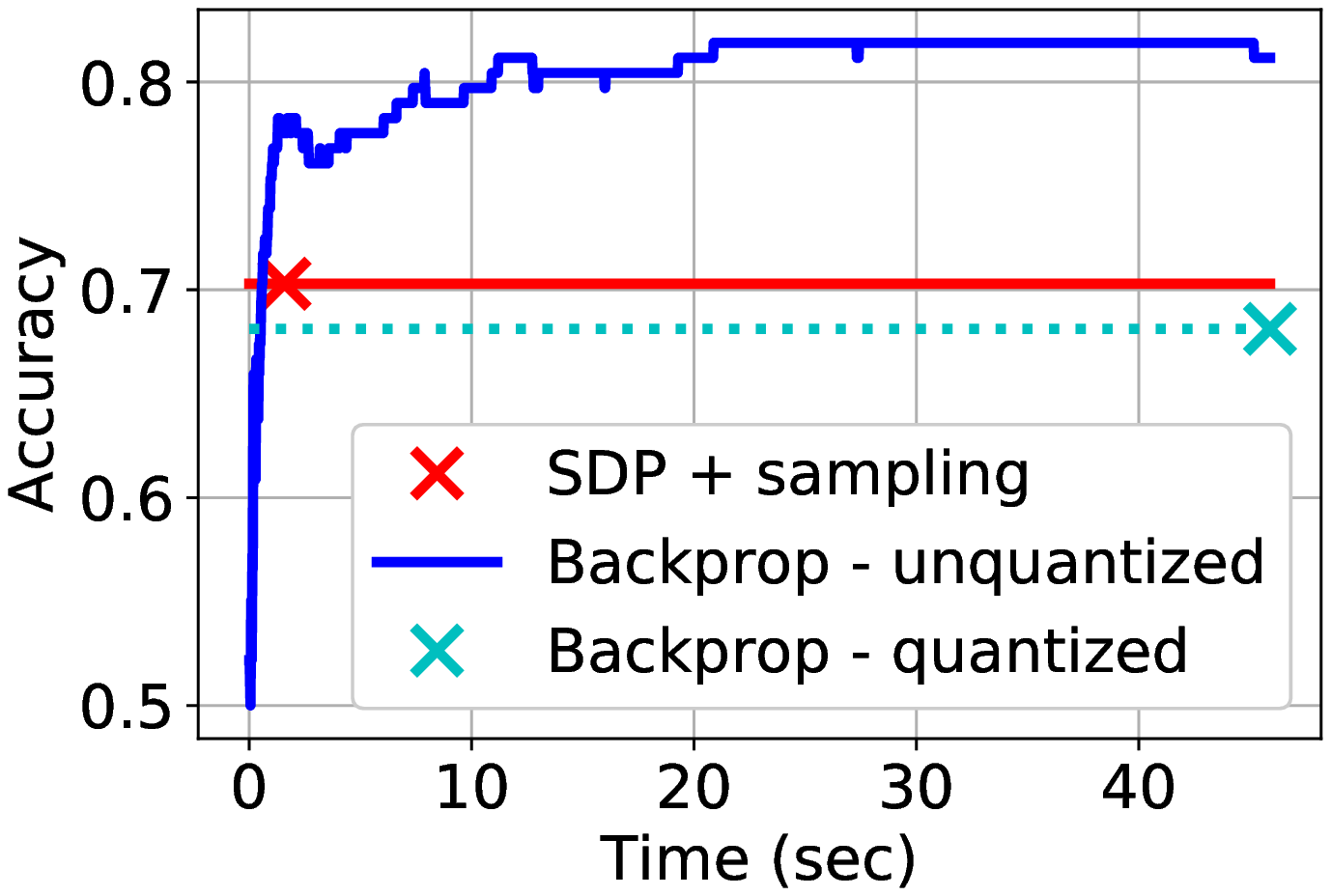}}
  \centerline{b) Test accuracy, $m=500$}
\end{minipage}

\begin{minipage}[b]{0.48\linewidth}
\centering
  \centerline{\includegraphics[width=0.9\columnwidth]{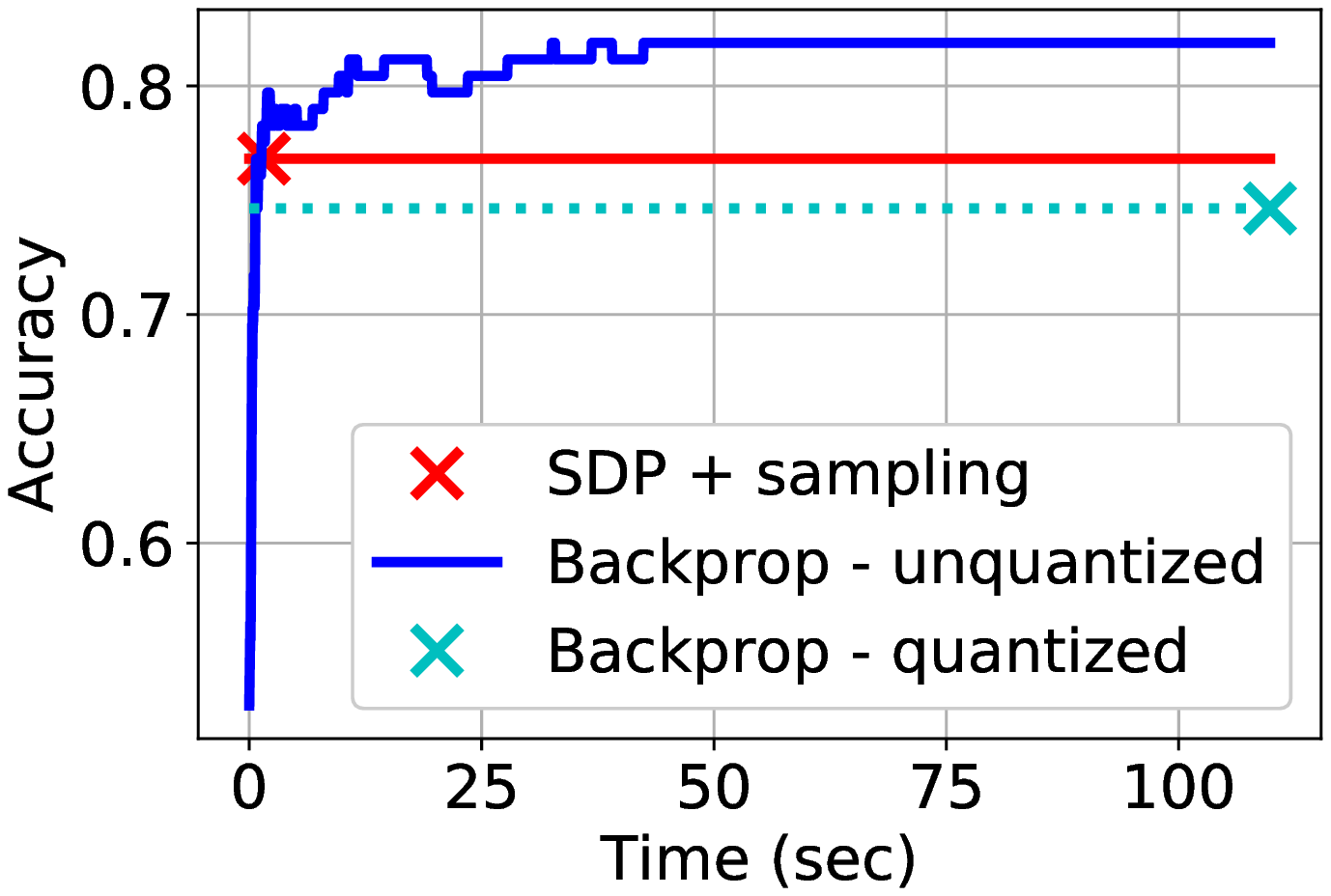}}
  \centerline{c) Training accuracy, $m=2500$}
\end{minipage}
\hfill
\begin{minipage}[b]{0.48\linewidth}
\centering
  \centerline{\includegraphics[width=0.9\columnwidth]{src_v2/v2figures_testacc_creditapproval_squared_n552_d15_beta_cvx-1_bp3_m5000.eps}}
  \centerline{d) Test accuracy, $m=2500$}
\end{minipage}
\caption{Classification accuracy against wall-clock time. Credit approval dataset with $n=552,d=15$. The number of neurons $m$ is specified in the sub-caption for each plot. The regularization coefficient is $\beta=10$ for the SDP based method and $\beta=0.001$ for backpropagation.}
\label{fig:realdata_exp_creditapproval}
\end{figure}

\subsection{ReLU network comparison}
Figure \ref{fig:relu_experiment} compares the SDP based sampling method with a two-layer ReLU network. We train the ReLU network using backpropagation and quantize the first layer weights post-training. The second layer weights are only scaled to account for the quantization of the first layer weights and not restricted to be identical. Thus, unlike the previous figures, the comparison in Figure \ref{fig:relu_experiment} unfairly favors the ReLU network. We observe that the SDP approach can still outperform SGD in this case.

\begin{figure} [ht]
\begin{minipage}[b]{0.48\linewidth}
\centering
  \centerline{\includegraphics[width=0.9\columnwidth]{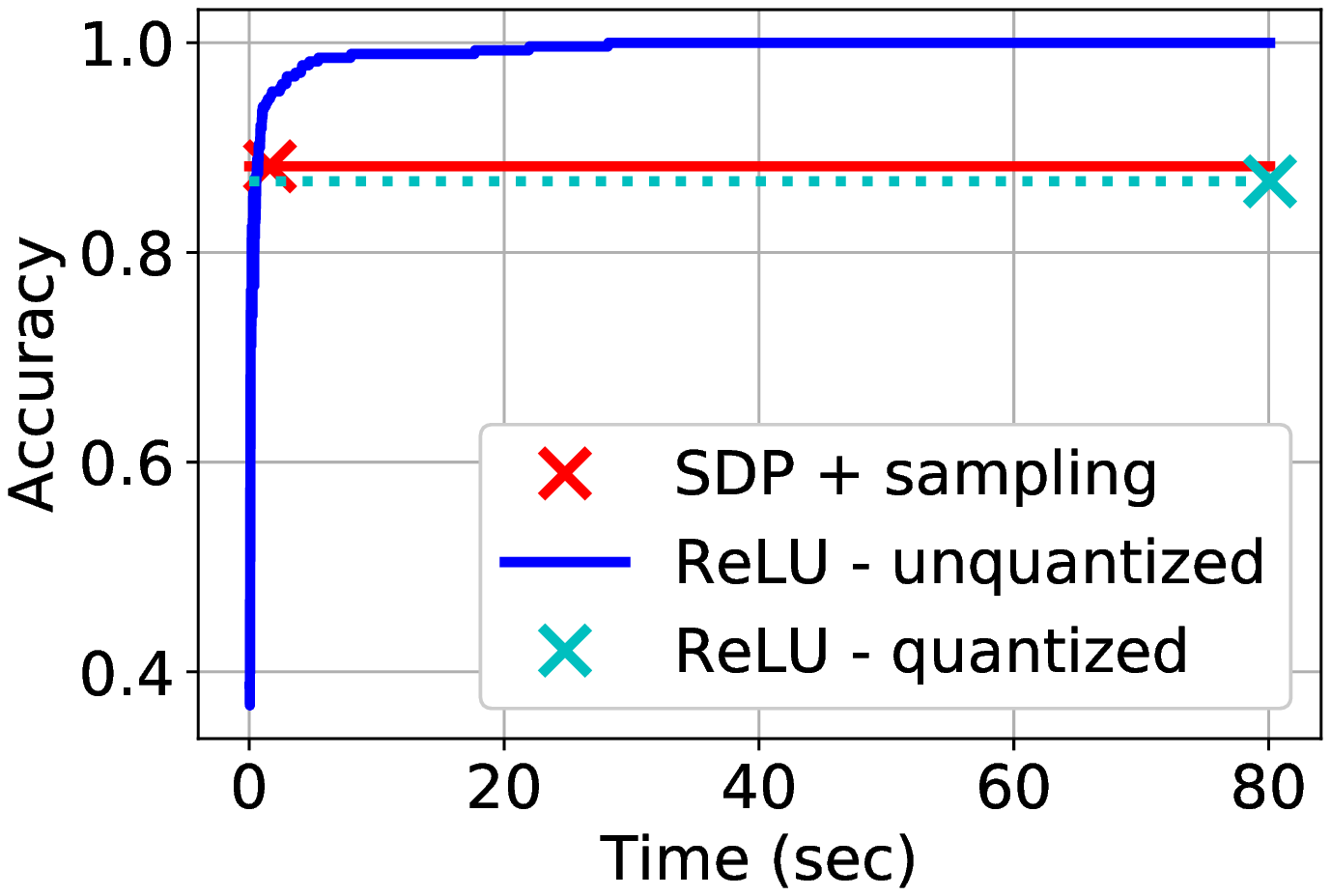}}
  \centerline{a) Training accuracy}
\end{minipage}
\hfill
\begin{minipage}[b]{0.48\linewidth}
\centering
  \centerline{\includegraphics[width=0.9\columnwidth]{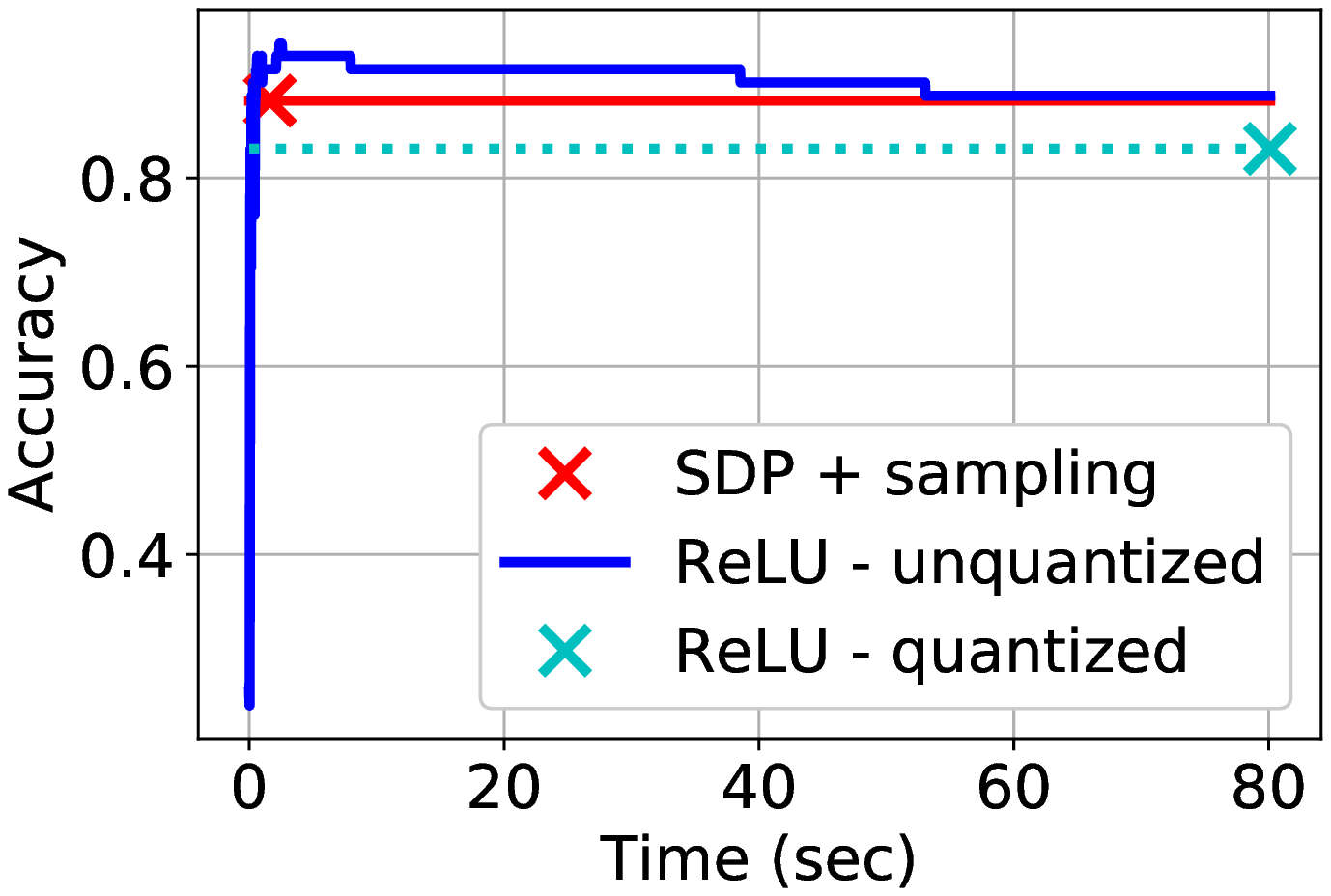}}
  \centerline{b) Test accuracy}
\end{minipage}
\caption{Classification accuracy against wall-clock time showing comparison with a two-layer ReLU network. Ionosphere dataset with $n=280,d=33$. For the SDP based sampling method, $m=2500$ and the regularization coefficient is $\beta=10$. For the ReLU network, $m=5000$ and $\beta=10^{-7}$.}
\label{fig:relu_experiment}
\end{figure}

\end{document}